\documentclass{llncs}

\usepackage[utf8]{inputenc}

\usepackage{amsmath}
\usepackage{amsfonts}
\usepackage{amssymb}

\usepackage[round]{natbib}
\usepackage{color, soul}
\usepackage[dvipsnames]{xcolor}
\usepackage[colorlinks=true,linkcolor=blue,citecolor=blue,pdfborder={0 0 0}]{hyperref}
\usepackage[capitalise]{cleveref}
\definecolor{brown}{rgb}{0.59, 0.29, 0.0}

\usepackage{dsfont}
\usepackage{bm}
\usepackage[normalem]{ulem}
\usepackage{graphicx}
\usepackage{enumerate}
\usepackage{paralist}
\usepackage{subcaption}

\usepackage{bbm}
\usepackage{booktabs}
\usepackage{verbatim}
\usepackage{algorithm}
\usepackage{algorithmic}
\usepackage{multirow}
\def\argmax{\mathop{\rm arg\, max}}
\def\argmin{\mathop{\rm arg\, min}}


%
%

\begin{document}

%

%

\title{Max $K$-armed bandit: \\ On the ExtremeHunter algorithm and beyond}

\author{Mastane Achab\inst{1} \and Stephan Clémençon\inst{1} \and Aurélien Garivier\inst{2} \and Anne Sabourin\inst{1} \and Claire Vernade\inst{1}}
\institute{LTCI, Télécom ParisTech, Université Paris-Saclay \and IMT, Université de Toulouse}

\maketitle

\begin{abstract}
This paper is devoted to the study of the \textit{max K-armed bandit problem}, which consists in sequentially allocating resources in order to detect extreme values. Our contribution is twofold.
We first significantly refine the analysis of the \textsc{ExtremeHunter} algorithm carried out in
\citet{carpentier2014extreme}, and next propose an alternative approach, showing that, remarkably, Extreme Bandits can be reduced to a classical version of the bandit problem to a certain extent. Beyond the formal analysis, these two approaches are compared through numerical experiments.
\end{abstract}


\section{Introduction}

In a classical multi-armed bandit (MAB in abbreviated form) problem, the objective is to find a  strategy/policy in order to
sequentially explore and exploit $K$ sources of gain, referred to as \textit{arms}, so as to
maximize the expected cumulative gain.
Each arm $k\in\{1,\; \ldots,\; K  \}$ is characterized by an unknown probability distribution $\nu_k$.
At each round $t\ge 1$, a strategy $\pi$ picks an arm $I_t=\pi((I_1,
X_{I_1,1}), \; \ldots,\; (I_{t-1}, X_{I_{t-1}, t-1}))$ and receives a random reward
$X_{I_t, t}$
sampled from distribution $\nu_{I_t}$.
Whereas usual strategies aim at finding and exploiting the arm with highest expectation, the quantity of interest in many applications such as medicine, insurance or finance may not be the sum of the rewards, but rather the \emph{extreme} observations (even if it might mean replacing loss minimization by gain maximization in the formulation of the practical problem).
In such situations, classical bandit algorithms can be significantly sub-optimal: the "best" arm should not be defined as that with highest expectation, but as that
producing the maximal values. This setting, referred to as \textit{extreme bandits} in \cite{carpentier2014extreme},
was originally introduced by \cite{cicirello2005aaai} by the name of
\textit{max $K$-armed bandit problem}.
In this framework, the goal pursued is to obtain the highest possible
reward during the first $n\ge 1$ steps. For a given arm $k$, we denote by
\[
G_n^{(k)}=\max_{1\le t\le n}X_{k, t}
\]
the maximal value taken until round $n\ge 1$ and assume that, in expectation,  there is a unique optimal arm
\[
k^\ast= \argmax_{1\le k\le K} \mathbb{E}[G_n^{(k)}]\ .
\]
The expected \emph{regret} of a strategy $\pi$ is here defined as
\begin{equation}
\mathbb{E}[R_n]= \mathbb{E}[G_n^{(k^\ast)}] - \mathbb{E}[G_n^{(\pi)}],
\label{eq:def_regret}
\end{equation}
where $G_n^{(\pi)}=\max_{1\le t\le n}X_{I_t, t}$ is the maximal value
observed when implementing strategy $\pi$.
When the supports of the reward distributions (\textit{i.e.} the $\nu_k$'s) are bounded, no-regret is expected provided that every arm can be sufficiently explored, refer to \cite{nishihara2016no} (see
also \cite{david2016pac} for a PAC approach). If infinitely many
arms are possibly involved in the learning strategy, the challenge is then to explore and exploit optimally the unknown reservoir
of arms, see~\cite{carpentier2015simple}.
When the rewards are unbounded in contrast, the situation is quite
different: the best arm is that for which the maximum $G_n^{(k)}$ tends to
infinity faster than the others.
In \cite{nishihara2016no}, it is shown that, for unbounded distributions, no policy can achieve no-regret without restrictive assumptions on the distributions.
In accordance with the literature, we focus on a classical framework in
extreme value
analysis. Namely, we assume that the reward distributions are  \emph{heavy-tailed}.
Such Pareto-like laws are widely used to model extremes in many applications, where a conservative approach to risk assessment might be relevant (\textit{e.g.} finance, environmental risks).
Like in \cite{carpentier2014extreme}, rewards are assumed to be distributed as second order Pareto laws in the present article. For the sake of completeness, we recall that
a probability law with cdf $F(x)$ belongs to the
$(\alpha, \beta, C, C')$-second order Pareto family if, for every $x\ge
0$,
\begin{equation}
\label{eq:def_pareto}
|1-Cx^{-\alpha}-F(x)|\le C'x^{-\alpha(1+\beta)}\, ,
\end{equation}
where $\alpha, \beta, C \text{ and } C'$ are strictly positive constants, see \textit{e.g.} \cite{resnick2007heavy}.
In this context, \citet{carpentier2014extreme} have proposed the
\textsc{ExtremeHunter} algorithm to solve the \textit{extreme bandit} problem and provided a regret analysis.

The contribution of this paper is twofold. First, the regret analysis of the \textsc{ExtremeHunter}
algorithm is significantly improved, in a nearly optimal fashion.
This essentially relies on a new technical result of independent interest
(see Theorem~\ref{thm1} below), which provides a bound for the difference between the expectation of the maximum among independent realizations $X_1,\; \ldots,\; X_T$ of a  $(\alpha, \beta, C, C')$-second order Pareto distribution,
$\mathbb{E}[\max_{1\le i\le T} X_i]$ namely, and its rough
approximation $(TC)^{1/\alpha}\Gamma(1-1/\alpha)$.
As a by-product, we propose a more simple \textsc{Explore-Then-Commit} strategy
that offers the same theoretical guarantees as  \textsc{ExtremeHunter}.
Second, we explain how extreme bandit can be reduced to a classical bandit problem to a certain extent. We show that a regret-minimizing strategy such as
\textsc{Robust-UCB} (see~\cite{bubeck2013bandits}), applied on correctly
left-censored rewards, may also reach a very good performance. This claim is
supported by theoretical guarantees on the number of pulls of the best arm
$k^\ast$ and by numerical experiments both at the same time. From a practical angle, the main drawback of this alternative approach
consists in the fact that its implementation requires some knowledge of the complexity of the problem (\textit{i.e.} of the gap between the first-order Pareto coefficients of the first
and second arms). In regard to its theoretical analysis, efficiency is proved for large horizons only.

This paper is organized as follows.
Section~\ref{sec:control} presents the technical result mentioned above, which next permits to carry out a refined regret analysis of the \textsc{ExtremeHunter} algorithm in \cref{sec:upper_bound}.
In \cref{sec3}, the regret bound thus obtained is proved to be nearly optimal: precisely, we establish a lower bound under the assumption that the distributions are close
enough to Pareto distributions showing the regret bound is sharp in this situation. In \cref{sec4}, reduction of the extreme bandit problem to
a classical bandit problem is explained at length, and an algorithm resulting from this original view is then described. Finally, we provide a preliminary
numerical study that permits to compare the two approaches from an experimental perspective. Due to space limitations, certain technical proofs are deferred to the Supplementary Material.


\section{Second-order Pareto distributions: approximation of the expected maximum among i.i.d. realizations
		 }\label{sec:control}

In the extreme bandit problem, the key to controlling the behavior of
explore-exploit strategies is to approximate the expected payoff of a
fixed arm $k\in\{1,\; \ldots,\; K  \}$. The main result of this section, stated in Theorem~\ref{thm1}, provides such control: it significantly improves upon the result originaly obtained by
\cite{carpentier2014extreme} (see Theorem 1 therein). As shall be next shown in Section~\ref{sec:upper_bound}, this refinement has substantial consequences on the regret bound.

In \cite{carpentier2014extreme}, the distance between the
expected maximum of independent realizations of a $(\alpha,\beta,
C,C')$-second order Pareto and the corresponding expectation of a Fr\'echet
distribution $(TC)^{1/\alpha}\Gamma(1-1/\alpha)$ is controlled as follows:
\begin{multline*}
\left|\mathbb{E}\left[\max_{1\le i\le
	T}X_i\right]-(TC)^{1/\alpha}\Gamma(1-1/\alpha)\right|
\le
\frac{4D_2C^{1/\alpha}}{T^{1-1/\alpha}}+\frac{2C'D_{\beta+1}}
{C^{\beta+1-1/\alpha}T^{\beta-1/\alpha}}\\+(2C'T)^\frac{1}{(1+\beta)\alpha}\ .
\end{multline*}
Notice that the leading term of this bound is $(2C'T)^{1/((1+\beta)\alpha)}$ as $T\rightarrow +\infty$. Below, we state a sharper result where, remarkably, this (exploding) term disappears, the contribution of the related component in the approximation error decomposition being proved as (asymptotically) negligible in contrast.
\begin{theorem}{\sc (Fr\'echet approximation bound)}
	\label{thm1}
	If  $X_1,\; \ldots,\; X_T$ are i.i.d. r.v.'s drawn from
	a $(\alpha, \beta, C, C')$-second order Pareto distribution with $\alpha>1$ and $T\ge Q_1$, where $Q_1$ is the constant depending only on $\alpha, \beta, C$ and $C'$ given in \cref{Q1} below, then,
	\begin{equation*}
		\begin{split}
	&\left|\mathbb{E}\left[\max_{1\le i\le
		T}X_i\right]-(TC)^{1/\alpha}\Gamma(1-1/\alpha)\right|\\
	&\le
	\frac{4D_2C^{1/\alpha}}{T^{1-1/\alpha}}+\frac{2C'D_{\beta+1}}{C^{\beta+1-1/\alpha}T^{\beta-1/\alpha}}
	+2(2C'T)^\frac{1}{(1+\beta)\alpha}e^{-HT^\frac{\beta}{\beta+1}}\\
	&= \underset{T\to\infty}{o} (T^{1/\alpha}),
		\end{split}
\end{equation*}
	where $H=C(2C')^{1/(\alpha(1+\beta))}/2$.
	In particular, if $\beta \ge 1$, we have:
	\[
	\left|\mathbb{E}\left[\max_{1\le i\le
		T}X_i\right]-(TC)^{1/\alpha}\Gamma(1-1/\alpha)\right| = o(1) \text{ as } T\rightarrow +\infty.
	\]
\end{theorem}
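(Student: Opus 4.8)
The plan is to compare the two quantities through their survival-function (layer-cake) representations. Since $\alpha>1$ the maximum is integrable, and I would start from $\mathbb{E}\bigl[\max_{1\le i\le T}X_i\bigr]=\int_0^\infty\bigl(1-F(x)^T\bigr)\,dx$, where $F$ is the common c.d.f., together with the identity $(TC)^{1/\alpha}\Gamma(1-1/\alpha)=\int_0^\infty\bigl(1-e^{-TCx^{-\alpha}}\bigr)\,dx$, which follows from the change of variable $u=TCx^{-\alpha}$, one integration by parts, and the definition of $\Gamma$ (valid precisely because $1-1/\alpha\in(0,1)$, which is also where the hypothesis $\alpha>1$ enters). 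Subtracting, the left-hand side of the theorem is at most $\int_0^\infty\bigl|e^{-TCx^{-\alpha}}-F(x)^T\bigr|\,dx$, and everything reduces to estimating this integral.

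I would then fix the threshold $\tau=(2C'T)^{1/(\alpha(1+\beta))}$, chosen so that the second-order error at $\tau$ equals $C'\tau^{-\alpha(1+\beta)}=1/(2T)$, and split the integral at $\tau$. On the lower range $[0,\tau]$ the two functions are both exponentially small in $T$: writing $\bar F=1-F$, monotonicity gives $\bar F(x)\ge\bar F(\tau)\ge C\tau^{-\alpha}-C'\tau^{-\alpha(1+\beta)}\ge\tfrac12 C\tau^{-\alpha}$ as soon as $T$ is large enough for the second-order bound to dominate at $\tau$, hence $F(x)^T\le e^{-T\bar F(x)}\le e^{-\frac{1}{2}TC\tau^{-\alpha}}$ on $[0,\tau]$, while $e^{-TCx^{-\alpha}}\le e^{-TC\tau^{-\alpha}}$ because $x\mapsto x^{-\alpha}$ decreases. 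Bounding the integrand by the sum of these quantities over an interval of length $\tau$ gives a contribution at most $2\tau\,e^{-\frac{1}{2}TC\tau^{-\alpha}}$, which after substituting $\tau$ (and using $T\ge Q_1$ to absorb the lower-order corrections into the exponent) is of the stated form $2(2C'T)^{1/(\alpha(1+\beta))}e^{-HT^{\beta/(\beta+1)}}$. This is exactly the place where the bound improves on \citet{carpentier2014extreme}: there this interval was controlled crudely by its length $\tau$, leaving the non-vanishing term $(2C'T)^{1/(\alpha(1+\beta))}$.

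On the upper range $[\tau,\infty)$ I would insert the intermediate quantity $e^{-T\bar F(x)}$ and write $\bigl|e^{-TCx^{-\alpha}}-F(x)^T\bigr|\le\bigl|e^{-TCx^{-\alpha}}-e^{-T\bar F(x)}\bigr|+\bigl|e^{-T\bar F(x)}-F(x)^T\bigr|$. The choice of $\tau$ guarantees $TC'x^{-\alpha(1+\beta)}\le\tfrac12$ and $\bar F(x)\le\tfrac12$ on $[\tau,\infty)$ for $T\ge Q_1$, which keeps both comparisons tight: using $|e^{-a}-e^{-b}|\le\max(e^{-a},e^{-b})\,|e^{|a-b|}-1|$ with $|a-b|\le\tfrac12$, the first difference is at most $c\,e^{-TCx^{-\alpha}}\,TC'x^{-\alpha(1+\beta)}$, and, via $\log(1-p)\ge-p-p^2/(2(1-p))$, the second is at most $e^{-T\bar F(x)}\,T\bar F(x)^2\le c'\,e^{-TCx^{-\alpha}}\,TC^2x^{-2\alpha}$, for universal constants $c,c'$. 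Integrating each estimate over $[\tau,\infty)$ with the substitution $u=TCx^{-\alpha}$ turns them into incomplete Gamma integrals bounded by $\Gamma(1+\beta-1/\alpha)$ and $\Gamma(2-1/\alpha)$ (finite since $\alpha>1$); collecting the powers of $T$ and $C$ reproduces the first two terms $\frac{4D_2C^{1/\alpha}}{T^{1-1/\alpha}}$ and $\frac{2C'D_{\beta+1}}{C^{\beta+1-1/\alpha}T^{\beta-1/\alpha}}$, with $D_2,D_{\beta+1}$ the constants of \citet{carpentier2014extreme}. One then takes $Q_1$ large enough to justify all the lower bounds on $T$ used above, and the conclusions $o(T^{1/\alpha})$ — and $o(1)$ when $\beta\ge1$ — follow by reading off the exponents $1/\alpha-1$, $1/\alpha-\beta$ and the exponential decay.

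I expect the delicate point to be the bookkeeping on $[\tau,\infty)$: one must retain the factor $e^{-TCx^{-\alpha}}$ inside every bound rather than replacing it by $1$, since it is precisely this that turns the naive polynomial rate $\sim T^{1/(\alpha(1+\beta))}$ into the sharp rates $T^{1/\alpha-1}$ and $T^{1/\alpha-\beta}$; this forces one to control the interplay between $\bar F(x)$, $Cx^{-\alpha}$ and $\tau$ simultaneously, and pinning down the explicit constants $H$ and $Q_1$ is then a routine, if somewhat technical, matter.
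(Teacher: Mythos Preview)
Your proposal is correct and follows essentially the same approach as the paper: the same layer-cake starting point, the same cut-off $\tau=B=(2C'T)^{1/((1+\beta)\alpha)}$, and the same key improvement on the bulk $[0,\tau]$, where both $F(x)^T$ and $e^{-TCx^{-\alpha}}$ are shown to be at most $e^{-\frac12 TC\tau^{-\alpha}}$ rather than crudely bounded by $1$. The only difference is cosmetic: for the tail $[\tau,\infty)$ the paper simply quotes the bound $\frac{4D_2C^{1/\alpha}}{T^{1-1/\alpha}}+\frac{2C'D_{\beta+1}}{C^{\beta+1-1/\alpha}T^{\beta-1/\alpha}}$ from \citet{carpentier2014extreme}, whereas you re-derive it by inserting $e^{-T\bar F(x)}$ and reducing to incomplete Gamma integrals.
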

	We emphasize that the bound above shows that the distance of
	$\mathbb{E}[\max_{1\le i\le	T}X_i]$ to the Fr\'echet mean $(TC)^{1/\alpha}\Gamma(1-\frac{1}{\alpha})$ actually vanishes as $T\to \infty$ as soon as $\beta \ge 1$, a property that shall be useful in Section~\ref{sec:upper_bound} to study the behavior
	of learning algorithms in the extreme bandit setting.
\begin{proof}
	Assume that $T\ge Q_1$, where
	\begin{equation}
		\label{Q1}
	Q_1 = \frac{1}{2C'}\max\left\{(2C'/C)^{(1+\beta)/\beta}, \; (8C)^{1+\beta}\right\}.
	\end{equation}
	As in the proof of Theorem 1 in \cite{carpentier2014extreme}, we consider the quantity $B=(2C'T)^{1/((1+\beta)\alpha)}$ that serves as a
	cut-off between tail and bulk behaviors. Observe that
	\begin{multline*}
	\left| \mathbb{E}\left[\max_{1\le i\le T}X_i\right] -
	(TC)^{1/\alpha}\Gamma(1-1/\alpha) \right|
	\le \\
	 \left| \int_{0}^\infty\left\{ 1-\mathbb{P}\left(\max_{1\le i\le
	T}X_i\le x\right)-1+e^{-TCx^{-\alpha}}\right\}\mathrm{d}x
	\right| \\
	\le \left|\int_{0}^B\left\{ \mathbb{P}\left(\max_{1\le i\le T}X_i\le x\right)
	-e^{-TCx^{-\alpha}} \right\} \mathrm{d}x \right|\\
	+ \left|\int_{B}^\infty\left\{ \mathbb{P}\left(\max_{1\le i\le T}X_i\le
	x\right)-e^{-TCx^{-\alpha}}\right\}\mathrm{d}x\right| .
	\end{multline*}
	For $p\in\{2, \beta+1\}$, we set $D_p=\Gamma(p-\frac{1}{\alpha})/\alpha$. Equipped with this notation, we may write
	\begin{equation*}
	\left|\int_{B}^\infty\left\{\mathbb{P}\left(\max_{1\le i\le T}X_i\le
	x\right)-e^{-TCx^{-\alpha}}\right\}\mathrm{d}x\right|
	\le
	\frac{4D_2C^{1/\alpha}}{T^{1-1/\alpha}}+\frac{2C'D_{\beta+1}}
	{C^{\beta+1-1/\alpha}T^{\beta-1/\alpha}}.
	\end{equation*}
	Instead of loosely bounding the bulk term by $B$, we write
	\begin{equation}
	\label{bulk}
	\left|\int_{0}^B\left\{\mathbb{P}\left(\max_{1\le i\le T}X_i\le
	x\right)-e^{-TCx^{-\alpha}}\right\}\mathrm{d}x\right|
	\le B\,\mathbb{P}\left(X_1\le B\right)^T +
	\int_{0}^Be^{-TCx^{-\alpha}}\mathrm{d}x\ .
	\end{equation}
	First, using \eqref{eq:def_pareto} and the inequality
	$C'B^{-(1+\beta)\alpha}\le CB^{-\alpha}/2$ (a direct consequence of \cref{Q1}), we obtain
	\begin{align*}
	&\mathbb{P}(X_1\le B)^T
	\le \left(1-CB^{-\alpha}+C'B^{-(1+\beta)\alpha}\right)^T\\
	&\le\left(1-\frac{1}{2}CB^{-\alpha}\right)^T
	\le e^{-\frac{1}{2}TCB^{-\alpha}}= e^{-HT^{\beta/(\beta+1)}}.
	\end{align*}
	Second, the integral in \cref{bulk} can be bounded as follows:
	\begin{equation*}
	\int_{0}^B e^{-TCx^{-\alpha}}\mathrm{d}x	\le Be^{-TCB^{-\alpha}}
	 = (2C'T)^{1/((1+\beta)\alpha)}e^{-2HT^{\beta/(\beta+1)}}.
\end{equation*}
This concludes the proof.
\end{proof}


\section{The \textsc{ExtremeHunter} and \textsc{ExtremeETC} algorithms}
\label{sec:upper_bound}

In this section, the tighter control provided by Theorem~\ref{thm1} is used in
order to refine the  analysis of the \textsc{ExtremeHunter}
algorithm (Algorithm~\ref{alg:EH}) carried out in  \cite{carpentier2014extreme}.
This theoretical analysis is also shown to be valid for \textsc{ExtemeETC}, a novel algorithm we next propose, that greatly improves upon \textsc{ExtremeHunter}, regarding computational efficiency.

\subsection{Further Notations and Preliminaries}
\label{notations}
Throughout the paper, the indicator function of any event $\mathcal{E}$ is denoted by $\mathbbm{1}\{\mathcal{E} \}$ and $\bar{\mathcal{E}}$ means the complementary event of $\mathcal{E}$.
We assume that the reward related to each arm $k\in\{1,\; \ldots,\; K  \}$ is drawn from a $(\alpha_k, \beta_k, C_k, C')$-second
order Pareto distribution.
Sorting the tail indices by increasing order of magnitude, we use the classical notation for order statistics:
$\alpha_{(1)}\le \dots\le\alpha_{(K)}$.
We assume that $\alpha_{(1)}>~1$, so that the random rewards have finite expectations, and suppose that the strict inequality $\alpha_{(1)}<\alpha_{(2)}$ holds true.
We also denote by $T_{k,t}$ the number of times the arm $k$ is pulled up to time $t$.
For $1\le k\le K$ and $i\ge 1$, the r.v. $\widetilde{X}_{k, i}$ is the reward obtained
at the $i$-th draw of arm $k$
if $i\le T_{k, n}$ or a new r.v. drawn from $\nu_k$ independent from the other r.v.'s otherwise.

We start with a preliminary lemma supporting the intuition that
the tail index $\alpha$ fully governs the extreme bandit problem. It will
allow to show next that the algorithm picks the right arm after the exploration phase, see Lemma~\ref{lemTast}.
\begin{lemma}{\sc (Optimal arm)}
	\label{lem:smallest_alpha}
	For $n$ larger than some constant $Q_4$ depending only on
        $(\alpha_k, \beta_k, C_k)_{1\le k\le K}$ and $C'$, the optimal arm
        for the extreme bandit problem is given by:
				\begin{equation}
					\label{eq:smallestAlpha}
            k^\ast =~\argmin_{1\le k\le K}\alpha_k
            = \argmax_{1\le k\le K} V_k,
				\end{equation}
				where $V_k = (nC_k)^{1/\alpha_k}\Gamma(1-1/\alpha_k)$.
\end{lemma}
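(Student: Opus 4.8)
The plan is to use Theorem~\ref{thm1} to replace, for each arm $k$, the expected payoff $\mathbb{E}[G_n^{(k)}]=\mathbb{E}[\max_{1\le t\le n}X_{k,t}]$ by the Fréchet mean $V_k=(nC_k)^{1/\alpha_k}\Gamma(1-1/\alpha_k)$ up to a negligible error, and then to observe that the arm with the smallest tail index grows strictly faster than all the others, so it must be optimal once $n$ is large enough.

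First I would apply Theorem~\ref{thm1} to each arm $k\in\{1,\dots,K\}$; this is legitimate as soon as $n\ge Q_1^{(k)}$, the constant of \cref{Q1} evaluated at $(\alpha_k,\beta_k,C_k,C')$ (recall $\alpha_k\ge\alpha_{(1)}>1$), and yields
\[
\big|\mathbb{E}[G_n^{(k)}]-V_k\big|\le\varepsilon_k(n),
\]
where $\varepsilon_k(n)$ denotes the right-hand side of the bound in Theorem~\ref{thm1} for arm $k$. A glance at its three terms — with exponents $1/\alpha_k-1<0$ and $1/\alpha_k-\beta_k<1/\alpha_k$, plus an exponentially decaying factor — shows $\varepsilon_k(n)=o(n^{1/\alpha_k})$, while $V_k=\Theta(n^{1/\alpha_k})$; hence $\mathbb{E}[G_n^{(k)}]=V_k\,(1+o(1))$.

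Next, set $k^\ast=\argmin_{1\le k\le K}\alpha_k$, which is well defined and unique thanks to the standing assumption $\alpha_{(1)}<\alpha_{(2)}$. For every $k\neq k^\ast$ one has $\alpha_k\ge\alpha_{(2)}>\alpha_{(1)}=\alpha_{k^\ast}$, so $1/\alpha_{k^\ast}>1/\alpha_k$ and
\[
\frac{V_k+\varepsilon_k(n)}{V_{k^\ast}-\varepsilon_{k^\ast}(n)}=\Theta\!\big(n^{\,1/\alpha_k-1/\alpha_{(1)}}\big)\xrightarrow[n\to\infty]{}0 .
\]
Consequently there is a threshold $n_0$, depending only on $(\alpha_k,\beta_k,C_k)_{1\le k\le K}$ and $C'$, beyond which $V_{k^\ast}-\varepsilon_{k^\ast}(n)>V_k+\varepsilon_k(n)$ for all $k\neq k^\ast$, whence $\mathbb{E}[G_n^{(k^\ast)}]>\mathbb{E}[G_n^{(k)}]$; this proves the first equality in \eqref{eq:smallestAlpha}. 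The second equality $\argmax_k V_k=\argmin_k\alpha_k$ follows from the very same comparison applied directly to the deterministic quantities $V_k=\Theta(n^{1/\alpha_k})$. Taking $Q_4=\max\{\max_k Q_1^{(k)},\,n_0\}$ finishes the argument.

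I expect the only delicate point to be making $Q_4$ explicit: one must bound $\varepsilon_k(n)/n^{1/\alpha_k}$ and the ratios $V_k/V_{k^\ast}$ uniformly over $k$, which is elementary but requires tracking the constants $D_2,D_{\beta+1},H$ and the slowly varying corrections. The conceptual skeleton — approximate by Theorem~\ref{thm1}, then exploit the strict gap $\alpha_{(2)}-\alpha_{(1)}>0$ between growth exponents — is otherwise routine.
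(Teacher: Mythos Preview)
Your proposal is correct and follows essentially the same approach as the paper: apply Theorem~\ref{thm1} to approximate $\mathbb{E}[G_n^{(k)}]$ by $V_k$, then exploit that $V_k=\Theta(n^{1/\alpha_k})$ together with the strict gap $\alpha_{(1)}<\alpha_{(2)}$. The only cosmetic difference is the direction of the argument---the paper starts from the definition $k^\ast=\argmax_k\mathbb{E}[G_n^{(k)}]$ and deduces $\alpha_{k^\ast}=\min_k\alpha_k$, whereas you fix $k^\ast=\argmin_k\alpha_k$ and show it is optimal---but the content is the same.
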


\begin{proof}
	We first prove the first equality.
  It follows from \cref{thm1} that there exists a constant $Q_2$,
  depending only on $\{(\alpha_k, \beta_k, C_k)\}_{1\le k\le K}$ and
  $C'$, such that for any arm $k\in\{1,\; \ldots,\; K \}$,
  $|\mathbb{E}[G_n^{(k)}]-V_k |\le V_k/2$.  Then for $k\neq k^\ast$ we
  have, for all $n>Q_2$,
  $V_k/2 \le \mathbb{E}[G_n^{(k)}]\le \mathbb{E}[G_n^{(k^\ast)}]\le
  3V_{k^\ast}/2$.
	Recalling that $V_k$ is proportional to $n^{1/\alpha_k}$, it follows that
  $\alpha_{k^\ast}=\min_{1\le k\le K} \alpha_k$.
	Now consider the following quantity:
\begin{equation}
 	\label{def:Q3}
Q_3 = \max_{k\neq k^\ast}\left[\frac{2C_k^{1/\alpha_k}\Gamma(1-1/\alpha_k)}{C_{k^\ast}^{1/\alpha_{k^\ast}}\Gamma(1-1/\alpha_{k^\ast})}\right]^{1/(1/\alpha_{k^\ast}-1/\alpha_k)}\ .
\end{equation}
For $n>Q_4=\max(Q_2, Q_3)$, we have $V_{k^\ast}>2V_k$ for any suboptimal  arm $k\neq k^\ast$,
which proves the second equality.

\end{proof}

From now on, we assume that $n$ is large enough for  \cref{lem:smallest_alpha} to apply.

 \subsection{The \textsc{ExtremeHunter} algorithm \citep{carpentier2014extreme}}\label{sec:extremeHbackground}
  Before developing a novel analysis of the extreme bandit problem in Section~\ref{sec:extremeHbackground} (see Theorem~\ref{theo:regretExtremeHunter}), we recall  the main
 features of \textsc{ExtremeHunter}, and in particular the estimators and
 confidence intervals involved in the indices of this optimistic
  policy.

\begin{algorithm}[H]
	\caption{\textsc{ExtremeHunter} \citep{carpentier2014extreme}}
	\label{alg:EH}
	\begin{algorithmic}[1]
		\STATE {\bfseries Input:} $K$: number of arms, $n$: time horizon, $b>0$
 	 such that $b\le\min_{1\le k\le K}\beta_k$, $N$: minimum number of pulls
 	 of each arm (\cref{N}).
 	 \STATE {\bfseries Initialize:} Pull each arm $N$ times.
 	 \STATE {\bfseries for} $k=1, \dots, K$ {\bfseries do}
 		 \STATE \quad Compute estimators $\widehat{h}_{k, KN}=\widetilde{h}_k(N)$ (\cref{estimH}) and
 		 $\widehat{C}_{k, KN}=\widetilde{C}_k(N)$ (\cref{estimC})
 		 \STATE \quad Compute index $B_{k, KN}$ (\cref{indexEH})
 	 \STATE {\bfseries end for}
	 \STATE Pull arm $I_{KN+1} = \argmax_{1\le k\le K}B_{k, KN}$
 	 \STATE {\bfseries for} $t=KN+2, \dots, n$ {\bfseries do}
		 \STATE \quad Update estimators $\widehat{h}_{I_{t-1}, t-1}$ and
		 $\widehat{C}_{I_{t-1}, t-1}$
		 \STATE \quad Update index $B_{I_{t-1}, t-1}$
 		 \STATE \quad Pull arm $I_t = \argmax_{1\le k\le K}B_{k, t-1}$
 	 \STATE {\bfseries end for}
	\end{algorithmic}
\end{algorithm}

\cref{thm1} states that for any arm $k\in\{1,\; \ldots,\; K  \}$, $\mathbb{E}[G_n^{(k)}]\approx (C_k
n)^{1/{\alpha_k}}\Gamma(1-1/\alpha_k)$. Consequently, the
optimal strategy in hindsight always pulls the arm
$ k^\ast=\argmax_{1\le k\le K}
\{(nC_k)^{1/\alpha_k}\Gamma(1-1/\alpha_k)\}$.
At each round and for each arm $k\in\{1,\; \ldots,\; K  \}$, \textsc{ExtremeHunter} algorithm
\citep{carpentier2014extreme} estimates the coefficients $\alpha_k$ and $C_k$ (but not $\beta_k$, see Remark 2 in \citet{carpentier2014extreme}). The
corresponding confidence intervals are detailed below. Then, following the \textit{optimism-in-the-face-of-uncertainty} principle (see \citep{auer2002finite} and references therein), the strategy  plays the arm
maximizing an optimistic plug-in estimate of
$(C_kn)^{1/\alpha_k}\Gamma(1-1/\alpha_k)$. To that purpose,
Theorem 3.8 in
\citet{carpentier2014adaptive} and Theorem 2 in
\citet{carpentier2014honest} provide estimators $\widetilde{\alpha}_k(T)$ and
$\widetilde{C}_k(T)$ for $\alpha_k$ and $C_k$ respectively, after $T$ draws of arm $k$. Precisely, the estimate $\widetilde\alpha_k(T)$ is given by
\begin{equation*}
	\label{estimAlpha}
	\widetilde\alpha_k(T)=\log\left( \frac{ \sum_{t=1}^{T}\mathbbm{1}\{X_t>e^r\} }{ \sum_{t=1}^{T}\mathbbm{1}\{X_t>e^{r+1}\} } \right)\, ,
\end{equation*}
where $r$ is chosen in an adaptive fashion based on Lepski's method, see \citep{MR1091202}, while
the estimator of $C_k$ considered is
\begin{equation}
	\label{estimC}
\widetilde{C}_k(T)=T^{-2b/(2b+1)}\sum_{i=1}^T \mathbbm{1}\{\widetilde{X}_{k,i}\ge
T^{\widetilde{h}_k(T)/(2b+1)}\},
\end{equation}
where
\begin{equation}
	\label{estimH}
	\widetilde{h}_k(T)=\min(1/\widetilde{\alpha}_k(T), 1)\ .
\end{equation}
The authors also provide  finite sample error bounds for $T\ge
N$, where
\begin{equation}
  \label{N}
 N= A_0(\log n)^{2(2b+1)/b},
\end{equation}
with $b$ a known lower bound on the $\beta_k$'s ($b\le~\min_{1\le k\le K}\beta_k$), and
$A_0$ a constant depending only on $(\alpha_k, \beta_k, C_k)_{1\le k\le
K}$ and $C'$.
These error bounds naturally define confidence intervals  of respective widths $\Lambda_1$ and $ \Lambda_2$
at level $\delta_0$ defined by
\begin{equation}
	\label{delta0}
\delta_0 = n^{-\rho},\quad \text{where}\quad \rho=\frac{2\alpha_{k^\ast}}{\alpha_{k^\ast}-1}.
\end{equation}
More precisely, we have
\begin{equation}
\mathbb{P}\left(\left|\frac{1}{\alpha_k}-\widetilde{h}_k(T)\right| \le
\Lambda_1(T),\; \left|C_k-\widetilde{C}_k(T)\right|\le \Lambda_2(T) \right) \ge 1-2\delta_0,  \label{errorHC}
\end{equation}
where
\begin{equation*}
\Lambda_1(T)=D\sqrt{\log(1/\delta_0)}T^{-b/(2b+1)} \text{ and }
\Lambda_2(T)=E\sqrt{\log(T/\delta_0)}\log(T)T^{-b/(2b+1)},
\end{equation*}
denoting by $D$ and $E$ some constants depending only on $(\alpha_k, \beta_k,
C_k)_{1\le k\le K}$ and $C'$.
When $T_{k, t}\ge N$, denote by $\widehat{h}_{k, t}=\widetilde{h}_k(T_{k, t})$ and
$\widehat{C}_{k, t}=\widetilde{C}_k(T_{k, t})$ the estimators based on the
$T_{k, t}$ observations for simplicity.
\textsc{ExtremeHunter}'s index $B_{k, t}$ for arm $k$ at time $t$, the optimistic
proxy for $\mathbb{E}[G_n^{(k)}]$, can be then written as
\begin{equation}
\label{indexEH}
B_{k, t}=\widetilde{\Gamma}\left(1-\widehat{h}_{k, t}-\Lambda_1(T_{k, t})\right)
\left(\left(\widehat{C}_{k, t}+\Lambda_2(T_{k, t})\right)n\right)^{\widehat{h}_{k, t}+\Lambda_1(T_{k,
t})}\, ,
\end{equation}
where $\widetilde{\Gamma}(x)=\Gamma(x)$ if $x>0$ and $+\infty$ otherwise.
\medskip

\noindent {\bf On computational complexity.}
Notice that after the initialization phase, at each time $t>KN$, \textsc{ExtremeHunter} computes estimators $\widehat{h}_{I_t, t}$
and $\widehat{C}_{I_t, t}$, each having a time complexity linear with the number of samples $T_{I_t, t}$ pulled from arm $I_t$ up to time $t$.
Summing on the rounds reveals that \textsc{ExtremeHunter}'s time complexity is quadratic with the time horizon $n$.

\subsection{\textsc{ExtremeETC}: a computationally appealing alternative}\label{sec:extremeETC}
In order to reduce the restrictive time complexity discussed previously, we now propose the \textsc{ExtremeETC} algorithm, an \textit{Explore-Then-Commit} version of \textsc{ExtremeHunter},
which offers similar theoretical guarantees.

\begin{algorithm}[H]
 \caption{\textsc{ExtremeETC}}
 \label{alg:ETC}
 \begin{algorithmic}[1]
	 \STATE {\bfseries Input:} $K$: number of arms, $n$: time horizon, $b>0$
	 such that $b\le\min_{1\le k\le K}\beta_k$, $N$: minimum number of pulls
	 of each arm (\cref{N}).
	 \STATE {\bfseries Initialize:} Pull each arm $N$ times.
	 \STATE {\bfseries for} $k=1, \dots, K$ {\bfseries do}
		 \STATE \quad Compute estimators $\widehat{h}_{k, KN}=\widetilde{h}_k(N)$ (\cref{estimH}) and
 		 $\widehat{C}_{k, KN}=\widetilde{C}_k(N)$ (\cref{estimC})
		 \STATE \quad Compute index $B_{k, KN}$ (\cref{indexEH})
	 \STATE {\bfseries end for}
	 \STATE Set $I_\text{winner} = \argmax_{1\le k\le K}B_{k, KN}$
	 \STATE {\bfseries for} $t=KN+1, \dots, n$ {\bfseries do}
		 \STATE \quad Pull arm $I_\text{winner}$
	 \STATE {\bfseries end for}
 \end{algorithmic}
\end{algorithm}

After the initialization phase, the \textit{winner arm}, which has maximal index $B_{k, KN}$, is fixed
and is pulled in all remaining rounds. Then \textsc{ExtremeETC}'s time complexity, due to the computation of $\widehat{h}_{k, KN}$ and
$\widehat{C}_{k, KN}$ only, is $\mathcal{O}\left(KN\right)=\mathcal{O}\left((\log n)^{2(2b+1)/b}\right)$, which is considerably
faster than quadratic time achieved by \textsc{ExtremeHunter}. For clarity, Table~\ref{complexities} summarizes time and memory complexities of both algorithms.

\begin{table}
	\begin{center}
	\begin{tabular}{l c c}
	  \toprule
	  \textbf{Complexity} & \textbf{\textsc{ExtremeETC}} & \textbf{\textsc{ExtremeHunter}}\\
	  \midrule
	  Time & $\mathcal{O}\bigl((\log n)^\frac{2(2b+1)}{b}\bigr)$ & $\mathcal{O}(n^2)$ \\
	  Memory & $\mathcal{O}\bigl((\log n)^\frac{2(2b+1)}{b}\bigr)$ & $\mathcal{O}(n)$ \\
	  \bottomrule
	  \end{tabular}
	  \bigskip

		\caption{Time and memory complexities required for estimating $(\alpha_k, C_k)_{1\le k\le K}$ in \textsc{ExtremeETC} and \textsc{ExtremeHunter}.}
		\label{complexities}
	\end{center}
\end{table}

Due to the significant gain of computational time, we used the \textsc{ExtremeETC} algorithm in our simulation study (Section~\ref{experiments}) rather than \textsc{ExtremeHunter}.
\medskip

\noindent {\bf Controlling the number of suboptimal rounds.}
We introduce a high probability event that corresponds to the favorable
situation where, at each round, all coefficients $(1/\alpha_k, C_k)_{1\le k \le
K}$ simultaneously belong to the confidence intervals recalled in the previous subsection.

\begin{definition}
	\label{event_xi}
	The event $\xi_1$ is the event on which the bounds
	\begin{equation*}
	\left|\frac{1}{\alpha_k}-\widetilde{h}_k(T)\right|\le \Lambda_1(T)\quad
	\mbox{and}\quad
	\left|C_k-\widetilde{C}_k(T)\right|\le \Lambda_2(T)
	\end{equation*}
hold true for any $1\le k\le K$ and $N\le T\le n$.
\end{definition}

The union bound combined with \eqref{errorHC} yields
\begin{equation}
\label{proba_xi1}
\mathbb{P}(\xi_1)\ge 1-2Kn\delta_0.
\end{equation}

\begin{lemma}
	\label{lemTast}
	For $n>Q_5$, where $Q_5$ is the constant defined in \eqref{def:Q5}, \textsc{ExtremeETC} and \textsc{ExtremeHunter}
	always pull the optimal arm after the initialization phase on the event $\xi_1$.
	Hence, for any suboptimal arm $k\neq k^\ast$, we have on $\xi_1$:
	\begin{equation*}
	T_{k, n} = N \quad \text{and thus}\quad T_{k^\ast, n} = n - (K-1)N.
	\end{equation*}
\end{lemma}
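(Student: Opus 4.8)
The plan is to run the classical optimism-in-the-face-of-uncertainty argument: on the event $\xi_1$ the index $B_{k^\ast,t}$ of the optimal arm never falls below $V_{k^\ast}$, while the index $B_{k,t}$ of any suboptimal arm $k\neq k^\ast$ never exceeds $V_k$ by more than a factor tending to $1$ as $n\to\infty$; since $V_{k^\ast}>2V_k$ by \cref{lem:smallest_alpha} (valid for $n>Q_4$), the optimal arm then has the strictly largest index at every round after the initialization phase, which is precisely the claim. Throughout, fix a round $t>KN$: after initialization every arm has been pulled at least $N$ times, so $T_{k,t}\ge N$ for all $k$, and hence on $\xi_1$ the confidence bounds of \cref{event_xi} are in force for every arm simultaneously.

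For the lower bound, on $\xi_1$ one has $\widehat h_{k^\ast,t}+\Lambda_1(T_{k^\ast,t})\ge 1/\alpha_{k^\ast}$ and $\widehat C_{k^\ast,t}+\Lambda_2(T_{k^\ast,t})\ge C_{k^\ast}$. Plugging these into the definition \eqref{indexEH} of $B_{k^\ast,t}$, using that $\Gamma$ is decreasing on $(0,1)\ni 1-1/\alpha_{k^\ast}$ (and that $\widetilde\Gamma=+\infty$ when its argument is nonpositive), and that $C_{k^\ast}n\ge 1$ for $n$ large so that raising a base $\ge 1$ to a larger exponent only increases it, yields $B_{k^\ast,t}\ge \Gamma(1-1/\alpha_{k^\ast})(C_{k^\ast}n)^{1/\alpha_{k^\ast}}=V_{k^\ast}$.

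For the upper bound, on $\xi_1$ one has $\widehat h_{k,t}+\Lambda_1(T_{k,t})\le 1/\alpha_k+2\Lambda_1(T_{k,t})$ and $\widehat C_{k,t}+\Lambda_2(T_{k,t})\le C_k+2\Lambda_2(T_{k,t})$; since $\Lambda_1(\cdot)$ and $\Lambda_2(\cdot)$ are nonincreasing and $T_{k,t}\ge N$, each perturbation is at most $\Lambda_1(N)$, resp.\ $\Lambda_2(N)$, and for $n$ large the argument of $\widetilde\Gamma$ stays in $(0,1)$. Substituting into \eqref{indexEH} and dividing by $V_k$ gives
\[
\frac{B_{k,t}}{V_k}\ \le\ \frac{\Gamma\!\left(1-\tfrac{1}{\alpha_k}-2\Lambda_1(N)\right)}{\Gamma\!\left(1-\tfrac{1}{\alpha_k}\right)}\cdot\frac{\bigl(C_k+2\Lambda_2(N)\bigr)^{\,1/\alpha_k+2\Lambda_1(N)}}{C_k^{\,1/\alpha_k}}\cdot n^{\,2\Lambda_1(N)}\ =:\ \eta_k(n).
\]
The crux of the argument — and the only place where care is genuinely needed — is to check that $\eta_k(n)\to 1$. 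The first two factors tend to $1$ because $\Lambda_1(N)\to 0$ and $\Lambda_2(N)\to 0$ (as $N$ is polylogarithmic in $n$ by \eqref{N} while $\log(1/\delta_0)=\rho\log n$ by \eqref{delta0}); the delicate factor is $n^{2\Lambda_1(N)}=e^{2\Lambda_1(N)\log n}$, which tends to $1$ precisely because $\Lambda_1(N)\log n = O\bigl((\log n)^{-1/2}\bigr)\to 0$, so the exploration bonus — which enters as an \emph{exponent} of $n$ — does not blow the index up.

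To conclude, set $\eta(n)=\max_{k\neq k^\ast}\eta_k(n)\to 1$ and take $Q_5=\max(Q_4,\widetilde Q)$, where $\widetilde Q$ is a threshold large enough that $\eta(n)\le 2$ and that the finitely many ``$n$ large'' inequalities used above hold. Then for $n>Q_5$, on $\xi_1$, and for every $t>KN$ and every suboptimal $k$,
\[
B_{k,t}\ \le\ \eta(n)\,V_k\ \le\ 2V_k\ <\ V_{k^\ast}\ \le\ B_{k^\ast,t},
\]
the strict inequality being \cref{lem:smallest_alpha}. Hence $\argmax_{1\le k\le K}B_{k,t}=k^\ast$ for \textsc{ExtremeHunter} and $\argmax_{1\le k\le K}B_{k,KN}=k^\ast$ for \textsc{ExtremeETC}, so both algorithms pull $k^\ast$ at every round after initialization; the identities $T_{k,n}=N$ for $k\neq k^\ast$ and $T_{k^\ast,n}=n-(K-1)N$ then follow immediately by counting.
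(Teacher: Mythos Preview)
Your proof is correct and follows essentially the same route as the paper: both establish the sandwich $V_k\le B_{k,t}\le V_k\cdot(\text{factor}\to 1)$ on $\xi_1$ and then conclude via $V_{k^\ast}>2V_k$ from \cref{lem:smallest_alpha}. The only difference is cosmetic: the paper imports the sandwich bound as Lemma~1 of \citet{carpentier2014extreme} (packaged into a constant $F$, giving the explicit $Q_5$ of \eqref{def:Q5}), whereas you rederive it directly from \eqref{indexEH} and the widths $\Lambda_1,\Lambda_2$---your key computation $\Lambda_1(N)\log n=O((\log n)^{-1/2})$ is exactly what underlies the paper's threshold.
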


\begin{proof}
	Here we place ourselves on the event $\xi_1$.
	For any arm $1\le k\le K$, Lemma 1 in \citet{carpentier2014extreme} provides lower
	and upper bounds for $B_{k, t}$ when $T_{k, t}\ge N$
	\begin{equation}
		\label{boundsBk}
	V_k
	\le B_{k, t} 
	\le V_k 
 \left(1+F\log n \sqrt{\log(n/\delta_0)}T_{k, t}^{-b/(2b+1)}\right)\, ,
	\end{equation}
	where $F$ is a constant which depends only on $(\alpha_k, \beta_k,
	C_k)_{1\le k\le K}$ and C'.
 Introduce the horizon $Q_5$, which depends on $(\alpha_k, \beta_k,
 C_k)_{1\le k\le K}$ and C'
\begin{equation}
\label{def:Q5}
Q_5 = \max\left(e^{\left(F\sqrt{1+\rho}A_0^{-b/(2b+1)}\right)^2}, Q_4\right).
\end{equation}
Then the following \cref{lem:Q5}, proved in \cref{appendix}, tells us that for $n$ large enough, the exploration made during the initialization phase is enough to find the optimal arm, with high probability.
\begin{lemma}
  \label{lem:Q5}
If $n>Q_5$, we have under the event $\xi_1$ that for any suboptimal arm $k\neq k^\ast$ and any time $t>KN$
that $B_{k, t}<B_{k^\ast, t}$\ .
\end{lemma}
	Hence the
	optimal arm is pulled at any time $t> KN$.
\end{proof}

The following result immediately follows from \cref{lemTast}.

\begin{corollary}
	\label{cor_T}
	For $n$ larger than some constant depending only on $(\alpha_k, \beta_k, C_k)_{1\le k\le K}$ and $C'$ we have under $\xi_1$
	\begin{equation*}
	T_{k^\ast, n}\ge n/2.
	\end{equation*}
\end{corollary}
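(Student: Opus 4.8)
\textbf{Proof proposal for Corollary~\ref{cor_T}.}
The plan is to invoke Lemma~\ref{lemTast} directly and then absorb the (polylogarithmic) cost of the initialization phase into a lower-order term. First I would require $n > Q_5$ so that Lemma~\ref{lemTast} applies; under $\xi_1$ this gives the exact identity $T_{k^\ast, n} = n - (K-1)N$, where $N = A_0(\log n)^{2(2b+1)/b}$ as defined in~\eqref{N}. Hence it suffices to show $(K-1)N \le n/2$, i.e.
\[
2(K-1)A_0(\log n)^{2(2b+1)/b} \le n .
\]
Since $b>0$ and $A_0$, $K$ depend only on $(\alpha_k,\beta_k,C_k)_{1\le k\le K}$ and $C'$, the left-hand side is of the form (constant)$\cdot(\log n)^{\text{constant}}$, which is $o(n)$ as $n\to\infty$. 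Therefore there is a threshold $Q_6$, depending only on $(\alpha_k,\beta_k,C_k)_{1\le k\le K}$ and $C'$, such that the displayed inequality holds for all $n \ge Q_6$.

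Taking $n$ larger than $\max(Q_5, Q_6)$ then yields, on $\xi_1$,
\[
T_{k^\ast, n} = n - (K-1)N \ge n - n/2 = n/2,
\]
which is the claim. There is no real obstacle here: the only mild point to be careful about is that the constant in the corollary must be expressed solely in terms of the model parameters, which is fine because both $Q_5$ (from~\eqref{def:Q5}, itself built from $Q_4$) and the threshold $Q_6$ governing $(\log n)^{2(2b+1)/b} = o(n)$ have exactly that dependence.
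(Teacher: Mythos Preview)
Your proof is correct and matches the paper's approach: the paper simply states that the corollary ``immediately follows from Lemma~\ref{lemTast}'', and you have spelled out precisely the obvious argument, namely that $T_{k^\ast,n}=n-(K-1)N$ on $\xi_1$ and that the polylogarithmic quantity $(K-1)N$ is at most $n/2$ for $n$ large enough.
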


\noindent {\bf Upper bounding the expected extreme regret.}
The upper bound on the expected
extreme regret stated in the theorem below improves upon that given in \citet{carpentier2014extreme} for \textsc{ExtremeHunter}. It is also valid for \textsc{ExtremeETC}.

\begin{theorem}\label{theo:regretExtremeHunter}
	\label{upper}
	For \textsc{ExtremeETC} and \textsc{ExtremeHunter}, the expected
	extreme regret is upper bounded as follows
	\begin{equation*}
	\mathbb{E}[R_n]
	= \mathcal{O}\left((\log n)^{2(2b+1)/b}n^{-(1-1/\alpha_{k^\ast})}+n^{-(b-1/\alpha_{k^\ast})}\right),
	\end{equation*}
as $n\rightarrow +\infty$. If $b\ge 1$, we have in particular	$\mathbb{E}[R_n]= o(1)$ as $n\rightarrow +\infty$.
\end{theorem}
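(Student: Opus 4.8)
The plan is to decompose the expected extreme regret $\mathbb{E}[R_n] = \mathbb{E}[G_n^{(k^\ast)}] - \mathbb{E}[G_n^{(\pi)}]$ according to whether the favorable event $\xi_1$ of \cref{event_xi} holds. On $\xi_1$, by \cref{lemTast} the optimal arm is pulled $T_{k^\ast,n} = n - (K-1)N$ times, so $G_n^{(\pi)}$ stochastically dominates the maximum of $n-(K-1)N$ i.i.d.\ draws from $\nu_{k^\ast}$; hence on this event the regret is controlled by the difference between $\mathbb{E}[\max_{1\le i\le n}X_{k^\ast,i}]$ and $\mathbb{E}[\max_{1\le i\le n-(K-1)N}X_{k^\ast,i}]$, both of which \cref{thm1} approximates by the corresponding Fr\'echet means $(nC_{k^\ast})^{1/\alpha_{k^\ast}}\Gamma(1-1/\alpha_{k^\ast})$ and $((n-(K-1)N)C_{k^\ast})^{1/\alpha_{k^\ast}}\Gamma(1-1/\alpha_{k^\ast})$. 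Since $N = A_0(\log n)^{2(2b+1)/b}$, a first-order Taylor expansion of $u\mapsto u^{1/\alpha_{k^\ast}}$ shows this gap is of order $N \cdot n^{1/\alpha_{k^\ast}-1} = \mathcal{O}((\log n)^{2(2b+1)/b} n^{-(1-1/\alpha_{k^\ast})})$, which is the first term in the bound; the residual Fr\'echet-approximation errors from \cref{thm1} are of smaller order (the $T^{-(\beta-1/\alpha)}$-type term, after dividing through, is absorbed as well), and one must check that the $\widetilde\Gamma$ edge case does not arise. A delicate point: $G_n^{(\pi)}$ on $\xi_1$ is not exactly the max over $n-(K-1)N$ i.i.d.\ draws of arm $k^\ast$ — it also includes the $N$ exploratory draws of each suboptimal arm, which can only help — so one gets the inequality in the right direction for the upper bound on regret.

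On the complementary event $\bar\xi_1$, which by \eqref{proba_xi1} has probability at most $2Kn\delta_0 = 2Kn^{1-\rho}$ with $\rho = 2\alpha_{k^\ast}/(\alpha_{k^\ast}-1)$, I would bound $\mathbb{E}[R_n \mathbbm{1}\{\bar\xi_1\}] \le \mathbb{E}[G_n^{(k^\ast)}\mathbbm{1}\{\bar\xi_1\}]$. The issue is that $G_n^{(k^\ast)}$ and $\mathbbm{1}\{\bar\xi_1\}$ are not independent, so I would use a Cauchy--Schwarz or Hölder argument: $\mathbb{E}[G_n^{(k^\ast)}\mathbbm{1}\{\bar\xi_1\}] \le (\mathbb{E}[(G_n^{(k^\ast)})^p])^{1/p}\,\mathbb{P}(\bar\xi_1)^{1/q}$ for conjugate exponents, or more simply bound $\mathbb{E}[(G_n^{(k^\ast)})^2]$ (finite since $\alpha_{k^\ast}>1$, in fact one needs $\alpha_{k^\ast}>2$ for the second moment, so it may be cleaner to take $p$ slightly above $1$ and use that a second-order Pareto with $\alpha>1$ has finite moments of order $< \alpha$) and combine with $\mathbb{P}(\bar\xi_1)^{1/q}$. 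Crudely, $\mathbb{E}[(G_n^{(k^\ast)})^p] = \mathcal{O}(n^{p/\alpha_{k^\ast}})$, so $\mathbb{E}[R_n\mathbbm{1}\{\bar\xi_1\}] = \mathcal{O}(n^{1/\alpha_{k^\ast}}(n^{1-\rho})^{1/q})$; the exponent $\rho$ was chosen in \eqref{delta0} precisely so that this contribution is negligible compared to the $\bar\xi_1^c$ term — I would verify $1/\alpha_{k^\ast} + (1-\rho)/q \le -(1-1/\alpha_{k^\ast})$ for a suitable $q$, which is exactly the role of the factor $2\alpha_{k^\ast}/(\alpha_{k^\ast}-1)$.

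Concretely the steps are: (i) write $\mathbb{E}[R_n] = \mathbb{E}[R_n\mathbbm{1}\{\xi_1\}] + \mathbb{E}[R_n\mathbbm{1}\{\bar\xi_1\}]$; (ii) on $\xi_1$, invoke \cref{lemTast} to reduce $G_n^{(\pi)}$ to the max over $n-(K-1)N$ i.i.d.\ rewards of $k^\ast$, apply \cref{thm1} to both $\mathbb{E}[G_n^{(k^\ast)}]$ and this reduced maximum, and Taylor-expand the power function to extract the $(\log n)^{2(2b+1)/b} n^{-(1-1/\alpha_{k^\ast})}$ term plus the $n^{-(b-1/\alpha_{k^\ast})}$ remainder; (iii) on $\bar\xi_1$, use moment bounds on second-order Pareto maxima plus \eqref{proba_xi1} and Hölder to show this term is $o$ of the main term thanks to the choice of $\rho$; (iv) collect terms and note that if $b\ge 1$ both $n^{-(1-1/\alpha_{k^\ast})}$ (times the polylog) and $n^{-(b-1/\alpha_{k^\ast})}$ tend to $0$, since $\alpha_{k^\ast}>1$ forces $1-1/\alpha_{k^\ast}>0$ and $b-1/\alpha_{k^\ast} > 1 - 1 = 0$. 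The main obstacle I anticipate is step (iii): handling the dependence between the running maximum and the bad event, and being careful about which moments of the second-order Pareto are finite (only those of order strictly below $\alpha_{k^\ast}$, which may be close to $1$), so that the Hölder exponents must be chosen in tandem with $\rho$; everything else is a fairly mechanical combination of \cref{thm1}, \cref{lemTast}, and elementary expansions.
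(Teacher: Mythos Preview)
Your overall strategy is sound and would yield the bound, but there is one oversight in step~(ii) that you should be aware of, and it is worth noting that the paper organises the argument differently.

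\medskip

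\textbf{The gap in step~(ii).} You write that on $\xi_1$ the regret is controlled by $\mathbb{E}[\max_{i\le n}X_{k^\ast,i}]-\mathbb{E}[\max_{i\le n-(K-1)N}X_{k^\ast,i}]$, and then apply \cref{thm1} to both unconditional expectations. But what you actually have after invoking \cref{lemTast} is
\[
\mathbb{E}\bigl[R_n\mathbbm{1}\{\xi_1\}\bigr]\ \le\ \mathbb{E}\bigl[G_n^{(k^\ast)}\bigr]\ -\ \mathbb{E}\Bigl[\max_{i\le n-(K-1)N}\widetilde X_{k^\ast,i}\,\mathbbm{1}\{\xi_1\}\Bigr],
\]
and the second expectation still carries the indicator $\mathbbm{1}\{\xi_1\}$. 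Since $\xi_1$ is defined through the estimators $\widetilde h_k,\widetilde C_k$, which are functions of the very same rewards $\widetilde X_{k^\ast,i}$, you cannot drop the indicator and invoke \cref{thm1} directly. The dependence problem you flag as the ``main obstacle'' in step~(iii) is therefore already present in step~(ii). The remedy is the same H\"older argument you describe for $\bar\xi_1$: write $\mathbb{E}[M\,\mathbbm{1}\{\xi_1\}]=\mathbb{E}[M]-\mathbb{E}[M\,\mathbbm{1}\{\bar\xi_1\}]$ and bound the correction by $(\mathbb{E}[M^{p}])^{1/p}\mathbb{P}(\bar\xi_1)^{1-1/p}$ with $p=(\alpha_{k^\ast}+1)/2$. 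Once you see this, steps~(ii) and~(iii) collapse into a single estimate.

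\medskip

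\textbf{Comparison with the paper.} The paper does not split $\mathbb{E}[R_n]$ on $\xi_1$ versus $\bar\xi_1$. Instead it lower-bounds $\mathbb{E}[G_n^{(\pi)}]$ in one stroke by $\mathbb{E}[\max_{i\le n-(K-1)N}\widetilde X_{k^\ast,i}\,\mathbbm{1}\{\xi_1\}]$ and then applies a dedicated lemma (\cref{lemma_upper}) that gives, for i.i.d.\ second-order Pareto samples and \emph{any} event $\xi$ with $\mathbb{P}(\xi)\ge 1-\delta$, a lower bound of the form
\[
\mathbb{E}\Bigl[\max_{i\le T}X_i\,\mathbbm{1}\{\xi\}\Bigr]\ \ge\ (TC)^{1/\alpha}\Gamma(1-1/\alpha)\ -\ \mathcal{O}\bigl((TC)^{1/\alpha}\delta^{1-1/\alpha}\bigr)\ -\ (\text{\cref{thm1} remainder}).
\]
Its proof is a quantile decomposition: split the tail integral at the $(1-\delta)$-quantile $x_\delta$ of $\max_i X_i$, use $\mathbb{P}(\bar\xi)\le\delta$ below $x_\delta$ and $\mathbb{P}(\max_i X_i>x)$ above. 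This replaces your H\"older step and gives the correction exponent $1-1/\alpha_{k^\ast}$ directly, which is exactly what the choice $\rho=2\alpha_{k^\ast}/(\alpha_{k^\ast}-1)$ in $\delta_0$ is calibrated to absorb. Your H\"older route with $p=(\alpha_{k^\ast}+1)/2$ reaches the same exponent (indeed the paper uses precisely this H\"older exponent in the proof of the lower bound, \cref{thm_lower}), so the two approaches are equivalent in strength; the paper's packaging via \cref{lemma_upper} just avoids the explicit case split and makes the role of $\delta_0$ more transparent.
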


The proof  of \cref{upper} is deferred to  \cref{appendix}.  It closely follows that of Theorem 2 in \cite{carpentier2014extreme}, the main difference being that their concentration bound (Theorem 1 therein) can be replaced by our tighter bound (see Theorem~\ref{thm1} in the present paper).
Recall that in Theorem 2 in \citet{carpentier2014extreme}, the upper bound on the expected
extreme regret for \textsc{ExtremeHunter} goes to infinity when $n\rightarrow +\infty$:
\begin{equation}
\label{upper_carp_valko}
\mathbb{E}[R_n]= \mathcal{O}\left(n^\frac{1}{(1+b)\alpha_{k^\ast}}\right).
\end{equation}

In contrast, in \cref{upper} when $b\ge 1$, the upper bound obtained vanishes  when $n\rightarrow +\infty$.
In the case $b<1$, the upper bound still improves upon \cref{upper_carp_valko} by a factor
$n^{(\alpha_{k^\ast} b(b+1)-b)/((b+1)\alpha_{k^\ast})}>n^{b^2/(2\alpha_{k^\ast})}$.




\section{Lower bound on the expected extreme regret}
\label{sec3}

In this section we prove a lower bound on the expected extreme regret for
\textsc{ExtremeETC} and \textsc{ExtremeHunter} in specific cases.
We assume now that $\alpha_{(2)}>2\alpha_{k^\ast}^2/(\alpha_{k^\ast}-1)$ and we start with a
 preliminary result on second order Pareto distributions, proved in \cref{appendix}.

\begin{lemma}
	\label{power}
	If $X$ is a r.v. drawn from a $(\alpha, \beta, C, C')$-second
	order Pareto distribution and $r$ is a strictly positive constant,
	the distribution of the r.v. $X^r$ is a $(\alpha/r, \beta, C, C')$-second order Pareto.
\end{lemma}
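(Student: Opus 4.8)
The plan is to work entirely at the level of cumulative distribution functions and simply verify the defining inequality \eqref{eq:def_pareto} for $X^r$ with the parameters $(\alpha/r, \beta, C, C')$. Write $F$ for the cdf of $X$ and $G$ for the cdf of $X^r$. Since $r>0$, the map $x\mapsto x^r$ is an increasing bijection of $[0,\infty)$ onto itself, so for every $y\ge 0$ the events $\{X^r\le y\}$ and $\{X\le y^{1/r}\}$ coincide; hence $G(y)=F(y^{1/r})$ for all $y\ge 0$.

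Next I would substitute this identity into the quantity that must be controlled:
\[
\left| 1 - C\,y^{-\alpha/r} - G(y) \right| = \left| 1 - C\left(y^{1/r}\right)^{-\alpha} - F\left(y^{1/r}\right) \right|.
\]
Applying the second-order Pareto bound \eqref{eq:def_pareto} satisfied by $X$ at the point $x=y^{1/r}\ge 0$ shows that the right-hand side is at most $C'\left(y^{1/r}\right)^{-\alpha(1+\beta)} = C'\,y^{-(\alpha/r)(1+\beta)}$, which is exactly the inequality \eqref{eq:def_pareto} with tail index $\alpha/r$, second-order parameter $\beta$, and constants $C$ and $C'$. This establishes that $X^r$ is $(\alpha/r,\beta,C,C')$-second order Pareto.

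There is essentially no obstacle here; the computation is a one-line change of variables. The only point deserving a moment's care is that \eqref{eq:def_pareto} is postulated for all $x\ge 0$ while we need the transformed inequality for all $y\ge 0$, and this is guaranteed precisely because $x\mapsto x^r$ is a bijection of $[0,\infty)$ (using the implicit convention, standard for Pareto-type laws, that $X$ takes values in $[0,\infty)$). Note also that only the first-order tail index is rescaled, to $\alpha/r$: the constants $C$, $C'$ and the second-order exponent $\beta$ are left untouched by the power transformation.
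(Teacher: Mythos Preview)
Your proposal is correct and essentially identical to the paper's own proof: both compute $G(y)=F(y^{1/r})$ from monotonicity of $x\mapsto x^r$ and then apply the defining inequality \eqref{eq:def_pareto} at the point $y^{1/r}$ to read off the new parameters $(\alpha/r,\beta,C,C')$.
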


In order to prove the lower bound on the expected extreme regret, we first establish that the event corresponding to the situation where the highest reward obtained by \textsc{ExtremeETC} and \textsc{ExtremeHunter} comes from
the optimal arm $k^\ast$ occurs with overwhelming probability. Precisely,
we denote by $\xi_2$ the event such that the bound
\begin{equation*}
\max_{k\neq k^\ast}\max_{1\le i\le N}\widetilde{X}_{k,i} \le \max_{1\le i\le
n-(K-1)N}\widetilde{X}_{k^\ast,i}.
\end{equation*}
holds true. The following lemma, proved in \cref{appendix}, provides a control of its probability of occurence.
\begin{lemma}
	\label{xi12}
	For $n$ larger than some constant depending only on $(\alpha_k, \beta_k, C_k)_{1\le k\le K}$ and $C'$, the following assertions hold true.
	\begin{enumerate}[(i)]
		\item  We have:
		\begin{equation*}
		\mathbb{P}(\xi_2)\ge 1-K\delta_0,
              \end{equation*}
              where $\delta_0$ is given in \cref{delta0}.
		\item Under the event $\xi_0=\xi_1\cap\xi_2$, the maximum reward obtained by
		\textsc{ExtremeETC} and \textsc{ExtremeHunter} comes from the optimal arm:
		\begin{equation*}
		\max_{1\le t\le n}X_{I_t, t} = \max_{1\le i\le
		n-(K-1)N}\widetilde{X}_{k^\ast, i} .
		\end{equation*}
	\end{enumerate}
\end{lemma}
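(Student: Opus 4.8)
The plan is to deduce (ii) from (i) together with \cref{lemTast}, and to prove (i) by a union bound over suboptimal arms combined with a two-sided threshold estimate on the extreme values; the main work will be a single asymptotic comparison that turns out to be governed exactly by the hypothesis $\alpha_{(2)}>2\alpha_{k^\ast}^2/(\alpha_{k^\ast}-1)$ imposed at the start of \cref{sec3}.

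First I would fix the probabilistic model by the standard stack-of-rewards coupling: draw once and for all an independent family $(\widetilde X_{k,i})_{1\le k\le K,\ i\ge 1}$ with $\widetilde X_{k,i}\sim\nu_k$, and let the policy reveal $\widetilde X_{k,i}$ at its $i$-th pull of arm $k$. This is consistent with the notation of \cref{notations}, makes the sequences $\widetilde X_{k,\cdot}$, $1\le k\le K$, mutually independent i.i.d.\ $(\alpha_k,\beta_k,C_k,C')$-second order Pareto sequences, and in particular $\xi_2$ becomes a purely distributional event. For (ii), on $\xi_1$ \cref{lemTast} gives $T_{k,n}=N$ for every suboptimal $k$ and $T_{k^\ast,n}=n-(K-1)N$, so the multiset of rewards collected by \textsc{ExtremeETC}/\textsc{ExtremeHunter} is exactly $\{\widetilde X_{k,i}:k\ne k^\ast,\,1\le i\le N\}\cup\{\widetilde X_{k^\ast,i}:1\le i\le n-(K-1)N\}$; hence $\max_{1\le t\le n}X_{I_t,t}=\max\big(\max_{k\ne k^\ast}\max_{1\le i\le N}\widetilde X_{k,i},\ \max_{1\le i\le n-(K-1)N}\widetilde X_{k^\ast,i}\big)$, and on $\xi_2$ the first term of this outer maximum is dominated, which yields (ii).

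For (i), write $m=n-(K-1)N$, $M_k=\max_{1\le i\le N}\widetilde X_{k,i}$ for $k\ne k^\ast$ and $M^\ast=\max_{1\le i\le m}\widetilde X_{k^\ast,i}$. The union bound reduces the task to showing $\mathbb{P}(M_k>M^\ast)\le\delta_0$ for each fixed suboptimal $k$ and all large $n$; summing over the $K-1$ suboptimal arms then gives $\mathbb{P}(\xi_2)\ge 1-(K-1)\delta_0\ge 1-K\delta_0$. Since $M_k$ and $M^\ast$ are independent and $\{M_k>M^\ast\}\subseteq\{M_k>u\}\cup\{M^\ast\le u\}$ for any level $u$, I would bound $\mathbb{P}(M_k>M^\ast)\le\mathbb{P}(M_k>u)+\mathbb{P}(M^\ast\le u)$ and pick $u$ so that both terms are at most $\delta_0/2$. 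From \eqref{eq:def_pareto} and a union bound, $\mathbb{P}(M_k>u)\le N\big(C_ku^{-\alpha_k}+C'u^{-\alpha_k(1+\beta_k)}\big)\le 2NC_ku^{-\alpha_k}$ once $u$ is large enough that $C'u^{-\alpha_k\beta_k}\le C_k$, hence $\le\delta_0/2$ as soon as $u\ge u_-(n):=(4NC_k/\delta_0)^{1/\alpha_k}$. Likewise, \eqref{eq:def_pareto} gives $F_{k^\ast}(u)\le 1-\tfrac12C_{k^\ast}u^{-\alpha_{k^\ast}}$ for $u$ large, so $\mathbb{P}(M^\ast\le u)\le(1-\tfrac12C_{k^\ast}u^{-\alpha_{k^\ast}})^m\le\exp(-\tfrac{m}{2}C_{k^\ast}u^{-\alpha_{k^\ast}})$; using $m\ge n/2$ for $n$ large, this is $\le\delta_0/2=n^{-\rho}/2$ as soon as $u\le u_+(n):=\big(C_{k^\ast}n/(4(\rho\log n+\log 2))\big)^{1/\alpha_{k^\ast}}$.

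The crux is then to verify that $[u_-(n),u_+(n)]$ is nonempty for $n$ large. Plugging in $\delta_0=n^{-\rho}$ and $N=A_0(\log n)^{2(2b+1)/b}$, one sees that $u_-(n)$ grows like $n^{\rho/\alpha_k}$ and $u_+(n)$ like $n^{1/\alpha_{k^\ast}}$, up to polylogarithmic factors, so $u_-(n)\le u_+(n)$ eventually provided $\rho/\alpha_k<1/\alpha_{k^\ast}$, i.e.\ $\alpha_k>\rho\alpha_{k^\ast}$. Since $\alpha_k\ge\alpha_{(2)}$ for every suboptimal $k$ and $\rho=2\alpha_{k^\ast}/(\alpha_{k^\ast}-1)$, this is exactly the standing hypothesis $\alpha_{(2)}>2\alpha_{k^\ast}^2/(\alpha_{k^\ast}-1)$. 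I expect this comparison — isolating the correct power of $n$ in $u_-$ and $u_+$ and recognizing that their ordering is controlled by the gap between $\alpha_{(2)}$ and $\alpha_{k^\ast}$ — to be the only delicate point; the tail estimates are routine consequences of the second-order Pareto property \eqref{eq:def_pareto}, and all the "$n$ large enough" requirements (dominating the second-order terms, $m\ge n/2$, nonemptiness of $[u_-(n),u_+(n)]$) can be folded into the constant appearing in the statement. Choosing any admissible $u$ then gives $\mathbb{P}(M_k>M^\ast)\le\delta_0$, which completes the proof.
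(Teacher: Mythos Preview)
Your proposal is correct and follows essentially the same route as the paper's proof. The paper packages the two tail estimates into a separate lemma (\cref{high_prob_bounds}) giving explicit thresholds $\ell_{k^\ast}(m,\delta_0)$ and $L_k(N,\delta_0)$, then applies a union bound over the $K$ events $\{M^\ast\ge\ell_{k^\ast}\}$ and $\{M_k\le L_k\}$; you instead derive the same tail bounds inline from \eqref{eq:def_pareto} and split $\delta_0$ into two halves per suboptimal arm, but the mechanics are identical. One remark: the paper simply asserts ``for $n$ larger than some constant'' that $L_k(N,\delta_0)\le\ell_{k^\ast}(m,\delta_0)$, whereas you correctly isolate the leading powers $n^{\rho/\alpha_k}$ versus $n^{1/\alpha_{k^\ast}}$ and recognize that their ordering is exactly the standing hypothesis $\alpha_{(2)}>2\alpha_{k^\ast}^2/(\alpha_{k^\ast}-1)$ --- this makes explicit a dependence the paper leaves implicit.
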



The following  lower bound shows that the upper bound  (\cref{upper})  is actually tight in
the case $b\ge 1$.
\begin{theorem}
	\label{thm_lower}
	If $b\ge 1$ and $\alpha_{(2)}>2\alpha_{k^\ast}^2/(\alpha_{k^\ast}-1)$,
	the expected extreme regret of \textsc{ExtremeETC} and \textsc{ExtremeHunter} are lower bounded as
	follows
	\begin{equation*}
	\mathbb{E}[R_n] = \Omega\left((\log
	n)^{2(2b+1)/b}n^{-(1-1/\alpha_{k^\ast})}\right)\ .
	\end{equation*}
\end{theorem}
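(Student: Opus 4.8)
The plan is to upper bound $\mathbb{E}[G_n^{(\pi)}]$ by the expected maximum of arm $k^\ast$ over only $m:=n-(K-1)N$ i.i.d.\ draws (up to a negligible remainder), and then to show via \cref{thm1} that this falls short of $\mathbb{E}[G_n^{(k^\ast)}]=\mathbb{E}[\max_{1\le i\le n}\widetilde X_{k^\ast,i}]$ by exactly the announced order. For the reduction, set $M_n=\max_{1\le k\le K}\max_{1\le i\le n}\widetilde X_{k,i}$. By \cref{xi12}(ii), on the event $\xi_0=\xi_1\cap\xi_2$ one has $G_n^{(\pi)}=\max_{1\le i\le m}\widetilde X_{k^\ast,i}$, while on $\bar\xi_0$ trivially $G_n^{(\pi)}\le M_n$; since these maxima are nonnegative this gives
\begin{equation*}
\mathbb{E}[R_n]\ \ge\ \mathbb{E}[G_n^{(k^\ast)}]-\mathbb{E}\Big[\max_{1\le i\le m}\widetilde X_{k^\ast,i}\Big]-\mathbb{E}\big[M_n\,\mathbbm{1}\{\bar\xi_0\}\big].
\end{equation*}

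Next I would argue that the remainder $\mathbb{E}[M_n\mathbbm{1}\{\bar\xi_0\}]$ is negligible. By the union bound, \eqref{proba_xi1} and \cref{xi12}(i), $\mathbb{P}(\bar\xi_0)\le 3Kn\delta_0=3Kn^{1-\rho}$ with $\rho=2\alpha_{k^\ast}/(\alpha_{k^\ast}-1)$ as in \eqref{delta0}. Because $\alpha_{k^\ast}$ may be only slightly above $1$, $M_n$ need not have a finite second moment, so instead of Cauchy--Schwarz I would use H\"older's inequality with a conjugate pair $(q,q')$ chosen so that $1<q<\alpha_{k^\ast}$: by \cref{power}, $\widetilde X_{k,i}^{\,q}$ is a $(\alpha_k/q,\beta_k,C_k,C')$-second order Pareto with $\alpha_k/q>1$, hence \cref{thm1} gives $\mathbb{E}[(\max_{1\le i\le n}\widetilde X_{k,i})^q]=O(n^{q/\alpha_k})=O(n^{q/\alpha_{k^\ast}})$, so $\big(\mathbb{E}[M_n^q]\big)^{1/q}=O(n^{1/\alpha_{k^\ast}})$ and
\begin{equation*}
\mathbb{E}\big[M_n\,\mathbbm{1}\{\bar\xi_0\}\big]\ \le\ \big(\mathbb{E}[M_n^q]\big)^{1/q}\,\mathbb{P}(\bar\xi_0)^{1/q'}\ =\ O\big(n^{1/\alpha_{k^\ast}}\,n^{(1-\rho)/q'}\big).
\end{equation*}
The interval $\big(\alpha_{k^\ast}/(\alpha_{k^\ast}-1),\,(\alpha_{k^\ast}+1)/(\alpha_{k^\ast}-1)\big)$ is nonempty, and for $q'$ chosen inside it one has both $q<\alpha_{k^\ast}$ and $\tfrac1{\alpha_{k^\ast}}+\tfrac{1-\rho}{q'}<-(1-\tfrac1{\alpha_{k^\ast}})$, so this remainder is $o(n^{-(1-1/\alpha_{k^\ast})})$.

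For the main term I would apply \cref{thm1} to the $n$ and the $m$ i.i.d.\ draws of arm $k^\ast$; since $\beta_{k^\ast}\ge b\ge 1$ and $m\asymp n$, the error terms there are $O(n^{-(1-1/\alpha_{k^\ast})})$ (the exponential terms being even smaller), so
\begin{equation*}
\mathbb{E}[G_n^{(k^\ast)}]-\mathbb{E}\Big[\max_{1\le i\le m}\widetilde X_{k^\ast,i}\Big]\ =\ C_{k^\ast}^{1/\alpha_{k^\ast}}\Gamma\big(1-\tfrac1{\alpha_{k^\ast}}\big)\big(n^{1/\alpha_{k^\ast}}-m^{1/\alpha_{k^\ast}}\big)+O\big(n^{-(1-1/\alpha_{k^\ast})}\big).
\end{equation*}
Using the elementary bound $1-(1-x)^{1/\alpha_{k^\ast}}\ge x/\alpha_{k^\ast}$ for $x\in(0,1)$ with $x=(K-1)N/n$ and $N=A_0(\log n)^{2(2b+1)/b}$,
\begin{equation*}
n^{1/\alpha_{k^\ast}}-m^{1/\alpha_{k^\ast}}\ \ge\ \frac{(K-1)N}{\alpha_{k^\ast}}\,n^{-(1-1/\alpha_{k^\ast})}\ =\ \frac{(K-1)A_0}{\alpha_{k^\ast}}\,(\log n)^{2(2b+1)/b}\,n^{-(1-1/\alpha_{k^\ast})}.
\end{equation*}
Combining the three steps, $\mathbb{E}[R_n]$ is at least a positive constant times $(\log n)^{2(2b+1)/b}n^{-(1-1/\alpha_{k^\ast})}$ minus an $O(n^{-(1-1/\alpha_{k^\ast})})$ term; since the former beats the latter by the diverging factor $(\log n)^{2(2b+1)/b}$, the claimed $\Omega$-bound follows for $n$ large enough.

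The hard part will be Step 2: heavy tails with a possibly small index $\alpha_{k^\ast}$ rule out a naive second-moment control of $M_n$, so one must combine the fast decay $\delta_0=n^{-\rho}$ with the moment transfer provided by \cref{power} and \cref{thm1}, and check that the admissible range of H\"older exponents is indeed nonempty. The hypotheses $b\ge 1$ and $\alpha_{(2)}>2\alpha_{k^\ast}^2/(\alpha_{k^\ast}-1)$ are what guarantee, through \cref{xi12}, that $\mathbb{P}(\bar\xi_0)$ is this small and that the moment bounds above go through; a secondary point to keep track of is that the first-order error terms of \cref{thm1} are of the same order $n^{-(1-1/\alpha_{k^\ast})}$ as the naive bound but are dominated by the main term precisely because the latter carries the extra $(\log n)^{2(2b+1)/b}$ factor coming from the length $N$ of the exploration phase.
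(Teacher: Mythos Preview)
Your argument is correct and follows essentially the same route as the paper: decompose according to $\xi_0$, use \cref{xi12}(ii) to reduce $G_n^{(\pi)}$ on $\xi_0$ to the maximum of $m=n-(K-1)N$ draws of arm $k^\ast$, control the $\bar\xi_0$-contribution via H\"older together with \cref{power} and \cref{thm1}, and compare the Fr\'echet approximations at $n$ and $m$ via \cref{thm1}. The only cosmetic differences are that the paper fixes the H\"older exponent $q=(\alpha_{k^\ast}+1)/2$ (the right endpoint of your interval, which yields the remainder $O(n^{-(1-1/\alpha_{k^\ast})})$ rather than your $o$), and that the paper obtains the main term via a Taylor expansion of $(1+x)^{1/\alpha_{k^\ast}}$ instead of your convexity inequality $1-(1-x)^{1/\alpha_{k^\ast}}\ge x/\alpha_{k^\ast}$.
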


\begin{proof}
  Here, $\pi$ refers to either \textsc{ExtremeETC} or else \textsc{ExtremeHunter}. In order to bound from  below
  $\mathbb{E}[R_n]=\mathbb{E}[G_n^{(k^\ast)}]-\mathbb{E}[G_n^{(\pi)}]$, we start with bounding $\mathbb{E}[G_n^{(\pi)}]$ as follows

	\begin{align}
	\mathbb{E}\left[G_n^{(\pi)}\right]
	&= \mathbb{E}\left[\max_{1\le t\le n}X_{I_t, t}\right]
	= \mathbb{E}\left[\max_{1\le t\le n}X_{I_t,
	t}\mathbbm{1}\{\xi_0\}\right]+\mathbb{E}\left[\max_{1\le t\le n}X_{I_t,
	t}\mathbbm{1}\{\bar{\xi}_0 \}\right]\nonumber\\
	&\le \mathbb{P}(\xi_0)\mathbb{E}\left[\max_{1\le t\le n}X_{I_t, t}\ \Big|\
	\xi_0\right]
	+\sum_{k=1}^K\mathbb{E}\left[\max_{1\le i\le T_{k, n}}\widetilde{X}_{k,
	i}\mathbbm{1}\{\bar{\xi}_0 \}\right]\, ,\label{sum_xi12}
	\end{align}
	where  $\widetilde{X}_{k, i}$ has been  defined in \cref{notations}.
	From $(ii)$ in \cref{xi12}, we have
	\begin{equation}
	\label{maxisast}
	\mathbb{E}\left[\max_{1\le t\le n}X_{I_t, t}\ \Big|\ \xi_0\right]
	= \mathbb{E}\left[\max_{1\le i\le n-(K-1)N}\widetilde{X}_{k^\ast, i}\ \Big|\
	\xi_0\right]\ .
	\end{equation}
	In addition, in the sum of expectations
        on the right-hand-side of \cref{sum_xi12}, 
        $T_{k,n}$ may be roughly bounded from above
        by $n$. A straightforward application of  Hölder inequality yields
	\begin{equation}
	\label{holder}
	\sum_{k=1}^K\mathbb{E}\left[\max_{1\le i\le T_{k, n}}\widetilde{X}_{k,
	i}\mathbbm{1}\{\bar{\xi}_0 \}\right]
	\le \sum_{k=1}^K\left(\mathbb{E}\left[\max_{1\le i\le n}\widetilde{X}_{k,
	i}^\frac{\alpha_{k^\ast}+1}{2}\right]\right)^\frac{2}{\alpha_{k^\ast}+1}
	\mathbb{P}\left(\bar{\xi}_0 \right)^\frac{\alpha_{k^\ast}-1}{\alpha_{k^\ast}+1}\ .
	\end{equation}

	From $(i)$ in \cref{xi12} and \cref{proba_xi1}, we have
	$\mathbb{P}(\bar{\xi}_0 )\le K(2n+1)\delta_0$.
	By virtue of \cref{power}, the r.v. $\widetilde{X}_{k, i}^{(\alpha_{k^\ast}+1)/2}$ follows a
	$(2\alpha_k/(\alpha_{k^\ast}+1), \beta_k, C_k, C')$-second order Pareto
	distribution.
	Then, applying \cref{thm1} to the right-hand side of \eqref{holder} and
	using the identity \eqref{maxisast}, the upper bound \eqref{sum_xi12} becomes

	\begin{align}
	&\mathbb{E}\left[G_n^{(\pi)}\right]
	\le \mathbb{E}\left[\max_{1\le i\le n-(K-1)N}\widetilde{X}_{k^\ast, i} \mathbbm{1}\{\xi_0\}\right]\nonumber \\
	&+ \sum_{k=1}^K\left((n C_k)^\frac{\alpha_{k^\ast}+1}{2\alpha_k}\Gamma\left(1-\frac{\alpha_{k^\ast}+1}{2\alpha_k}\right)
	+o\left(n^\frac{\alpha_{k^\ast}+1}{2\alpha_k}\right)\right)^\frac{2}{\alpha_{k^\ast}+1}(K(2n+1)\delta_0)^\frac{\alpha_{k^\ast}-1}{\alpha_{k^\ast}+1}\nonumber \\
	&\le \mathbb{E}\left[\max_{1\le i\le n-(K-1)N}\widetilde{X}_{k^\ast, i}\right] +
	\mathcal{O}\left(n^{-(1-1/\alpha_{k^\ast})} \right), 	\label{GnEH}
	\end{align}
	where the last inequality comes from the definition of $\delta_0$.
	Combining \cref{thm1} and  \eqref{GnEH} we finally obtain the desired lower bound 
	\begin{equation*}
	\begin{split}
	&\mathbb{E}[R_n]
	= \mathbb{E}\left[G_n^{(k^\ast)}\right]-\mathbb{E}\left[G_n^{(\pi)}\right]\\
	&\ge
	\Gamma(1-1/\alpha_{k^\ast})C_{k^\ast}^{1/\alpha_{k^\ast}}\left(n^{1/\alpha_{k^\ast}}-(n-(K-1)N)^{1/\alpha_{k^\ast}}\right)
	+\mathcal{O}\left(n^{-(1-1/\alpha_{k^\ast})}\right)\\
	&= \frac{\Gamma(1-1/\alpha_{k^\ast})C_{k^\ast}^{1/\alpha_{k^\ast}}}{\alpha_{k^\ast}}(K-1)Nn^{-(1-1/\alpha_{k^\ast})}
	+\mathcal{O}\left(n^{-(1-1/\alpha_{k^\ast})}\right),
	\end{split}
	\end{equation*}
  where we used a Taylor expansion of $x\mapsto (1+x)^{1/\alpha_{k^\ast}}$ at zero for the last equality.
\end{proof}


\section{A reduction to classical bandits}
\label{sec4}

The goal of this section is to render explicit the connections between the max $K$-armed bandit considered in the present paper and a particular instance of the classical Multi-Armed Bandit (MAB) problem.

\subsection{MAB setting for extreme rewards}\label{sec:truncRewards}

In a situation where only the large rewards matter, an alternative to
the max $k$-armed problem would be to consider the expected cumulative
sum of the most `extreme' rewards, that is, those which exceeds a
given high threshold
$u$. 
For $k\in\{1,\; \ldots,\; K  \}$ and $t\in\{1,\; \ldots,\; n  \}$, we denote by $Y_{k, t}$ these new rewards
\begin{equation*}
Y_{k, t}=X_{k, t}\mathbbm{1}\{X_{k, t}>u\}\ .
\end{equation*}
In this context, the classical MAB problem  consists in maximizing the expected cumulative gain
\[
\mathbb{E}\left[G^{\text{MAB}}\right] = \mathbb{E}\left[ \sum_{t=1}^n Y_{I_t,t}\right].
\]

It turns out that for a high enough threshold $u$,   the unique optimal arm
for this MAB problem, $\argmax_{1\le k\le K}\mathbb{E}[Y_{k, 1}]$, is also the
optimal arm $k^\ast$ for the max $k$-armed  problem.
We still assume second order Pareto distributions for the random variables
$X_{k, t}$ and that all the hypothesis listed in Section~\ref{notations} hold true. 
The rewards $\{Y_{k, t}\}_{1\le k\le K, 1\le t\le T}$ are also heavy-tailed so that it is legitimate to 
attack this MAB problem with the  \textsc{Robust UCB} algorithm
\citep{bubeck2013bandits},
which assumes that the rewards have finite moments of order $1+\epsilon$
\begin{equation}
\label{eps_v}
\max_{1\le k\le K}\mathbb{E}\left[\left|Y_{k, 1}\right|^{1+\epsilon}\right]\le v\, ,
\end{equation}
where $\epsilon\in(0, 1]$ and $v>0$ are known constants. Given our second order
Pareto assumptions, it follows that \cref{eps_v} holds with
$1+\epsilon<\alpha_{(1)}$.
Even if the knowledge of  such constants $\epsilon$ and $v$ is a strong assumption, it is
still fair to compare \textsc{Robust UCB} to \textsc{ExtremeETC/Hunter}, which also
has strong requirements.
Indeed, \textsc{ExtremeETC/Hunter} assumes that $b$ and $n$
are known and verify conditions depending on unknown problem parameters (e.g. $n\ge Q_1$, see \cref{Q1}).

The following Lemma, whose the proof is postponed to \cref{appendix}, ensures that the two bandit problems are equivalent for high thresholds.

\begin{lemma}
	\label{lemma_threshold_u}
	\begin{equation}
	\label{threshold_u}
	\text{If}\quad u>\max\left(1, \left(\frac{2C'}{\min_{1\le k\le K}C_k}\right)^\frac{1}{\min_{1\le k\le K}\beta_k},
	\left(\frac{3\max_{1\le k\le K}C_k}{\min_{1\le k\le K}C_k}\right)^\frac{1}{\alpha_{(2)}-\alpha_{(1)}}\right)\,
	 ,
	\end{equation}
	then the unique best arm for the MAB problem is $\argmin_{1\le k\le
	K}\alpha_k=k^\ast$.
\end{lemma}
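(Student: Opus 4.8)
The plan is to compute $\mathbb{E}[Y_{k,1}] = \mathbb{E}[X_{k,1}\mathbbm{1}\{X_{k,1}>u\}]$ for each arm $k$, obtain two-sided bounds on it in terms of the Pareto parameters, and then show that under condition \eqref{threshold_u} the arm with the smallest tail index $\alpha_k$ strictly dominates all others. First I would write, for a $(\alpha_k,\beta_k,C_k,C')$-second order Pareto variable $X$ with cdf $F_k$,
\[
\mathbb{E}[X\mathbbm{1}\{X>u\}] = u\,(1-F_k(u)) + \int_u^\infty (1-F_k(x))\,\mathrm{d}x,
\]
via integration by parts (valid since $\alpha_k>1$ guarantees the integral converges). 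Plugging in the bound $|1-F_k(x)-C_kx^{-\alpha_k}|\le C'x^{-\alpha_k(1+\beta_k)}$ from \eqref{eq:def_pareto} and integrating term by term, the dominant contribution is
\[
\mathbb{E}[Y_{k,1}] = \frac{\alpha_k}{\alpha_k-1}C_k u^{-(\alpha_k-1)}\bigl(1 + \varepsilon_k(u)\bigr),
\]
where the relative error $\varepsilon_k(u)$ is controlled by the second-order term and is of order $u^{-\alpha_k\beta_k}$ (times harmless constants depending on $\alpha_k,\beta_k$); more precisely one gets an explicit two-sided bound $|\varepsilon_k(u)| \le c\, C' C_k^{-1} u^{-\beta_k \alpha_k}$ for $u>1$, and the first condition $u > (2C'/\min_k C_k)^{1/\min_k \beta_k}$ is exactly what forces every $\varepsilon_k(u)$ to lie in $(-1/2, 1/2)$ (or some similarly small interval), so that $\mathbb{E}[Y_{k,1}]$ is pinned between $\tfrac12 \tfrac{\alpha_k}{\alpha_k-1}C_k u^{-(\alpha_k-1)}$ and $\tfrac32\tfrac{\alpha_k}{\alpha_k-1}C_k u^{-(\alpha_k-1)}$.

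Next I would compare arm $k^\ast = \argmin_k \alpha_k$ (which is well-defined and unique since $\alpha_{(1)}<\alpha_{(2)}$) with any other arm $k$. Using the lower bound for $k^\ast$ and the upper bound for $k$,
\[
\frac{\mathbb{E}[Y_{k^\ast,1}]}{\mathbb{E}[Y_{k,1}]} \ge \frac{\tfrac12\,\tfrac{\alpha_{k^\ast}}{\alpha_{k^\ast}-1}C_{k^\ast} u^{-(\alpha_{k^\ast}-1)}}{\tfrac32\,\tfrac{\alpha_k}{\alpha_k-1}C_k u^{-(\alpha_k-1)}} = \frac{1}{3}\cdot\frac{\alpha_{k^\ast}(\alpha_k-1)}{\alpha_k(\alpha_{k^\ast}-1)}\cdot\frac{C_{k^\ast}}{C_k}\cdot u^{\alpha_k - \alpha_{k^\ast}}.
\]
Since $\alpha_k \ge \alpha_{(2)} > \alpha_{(1)} = \alpha_{k^\ast}$ for $k\ne k^\ast$, the exponent $\alpha_k - \alpha_{k^\ast}$ is at least $\alpha_{(2)}-\alpha_{(1)}>0$, so $u^{\alpha_k-\alpha_{k^\ast}} \ge u^{\alpha_{(2)}-\alpha_{(1)}}$; and dropping the factor $\tfrac{\alpha_{k^\ast}(\alpha_k-1)}{\alpha_k(\alpha_{k^\ast}-1)}$, which exceeds $1$ because $x\mapsto \tfrac{\alpha}{\alpha-1}$ is decreasing, it suffices that $\tfrac13 \tfrac{C_{k^\ast}}{C_k} u^{\alpha_{(2)}-\alpha_{(1)}} > 1$, i.e. $u^{\alpha_{(2)}-\alpha_{(1)}} > 3 C_k/C_{k^\ast} $. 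Taking the worst case over $k$ and bounding $C_k \le \max_j C_j$, $C_{k^\ast}\ge \min_j C_j$, this is guaranteed by the third condition $u > (3\max_j C_j/\min_j C_j)^{1/(\alpha_{(2)}-\alpha_{(1)})}$. Hence $\mathbb{E}[Y_{k^\ast,1}] > \mathbb{E}[Y_{k,1}]$ for every $k\ne k^\ast$, which is the claim; the condition $u>1$ is used throughout to ensure the error bounds take the stated clean polynomial form.

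The main obstacle — really the only delicate point — is making the relative-error bound $|\varepsilon_k(u)|$ fully explicit and uniform in $k$, so that the first threshold condition provably yields the factor-of-$3$ window used in the comparison. This is a routine but slightly tedious estimate: one splits $\int_u^\infty(1-F_k)$ into the leading Pareto piece $\int_u^\infty C_k x^{-\alpha_k}\mathrm{d}x = \tfrac{C_k}{\alpha_k-1}u^{-(\alpha_k-1)}$ and an error piece bounded in absolute value by $\int_u^\infty C' x^{-\alpha_k(1+\beta_k)}\mathrm{d}x = \tfrac{C'}{\alpha_k(1+\beta_k)-1}u^{-(\alpha_k(1+\beta_k)-1)}$, does the same for the boundary term $u(1-F_k(u))$, and collects everything; since $\alpha_k>1$, each denominator $\alpha_k\beta_k + (\alpha_k-1)$ etc. is bounded below by a positive constant depending only on $\alpha_{(1)}$ and $\min_k\beta_k$. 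Keeping track of these constants one sees exactly which multiple of $(2C'/\min_k C_k)^{1/\min_k\beta_k}$ is needed; the paper evidently absorbs the slack into the "$2$" and the "$3$" appearing in \eqref{threshold_u}. Everything else is arithmetic.
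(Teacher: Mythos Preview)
Your proposal is correct and matches the paper's own proof essentially line for line: the paper also writes $\mathbb{E}[X_{k,1}\mathbbm{1}\{X_{k,1}>u\}] = u(1-F_k(u)) + \int_u^\infty(1-F_k(x))\,\mathrm{d}x$, bounds this above and below by $M_k\pm\Delta_k$ with $M_k=\tfrac{C_k\alpha_k}{\alpha_k-1}u^{1-\alpha_k}$ and $\Delta_k=\tfrac{C'\alpha_k(1+\beta_k)}{\alpha_k(1+\beta_k)-1}u^{1-\alpha_k(1+\beta_k)}$, uses the first two threshold conditions to get $\Delta_k<\tfrac12 M_k$, and then uses the third condition to obtain $\tfrac12 M_{k^\ast}>\tfrac32 M_k$. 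Your multiplicative framing $M_k(1+\varepsilon_k)$ and the explicit observation that $\alpha\mapsto\alpha/(\alpha-1)$ is decreasing are the same steps the paper uses implicitly; the constants $2$ and $3$ indeed work out exactly as you anticipated once one notes $\tfrac{(1+\beta_k)(\alpha_k-1)}{\alpha_k(1+\beta_k)-1}<1$.
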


\begin{remark}
	\label{remarkMonitorU}
	Tuning the threshold $u$ based on the data is a difficult question, outside our scope.
	A standard practice is to monitor a relevant output (e.g. estimate of $\alpha$) as a function of the threshold $u$ and to pick the latter as low as possible in the stability region of the output.
	This is related to the Lepski's method, see e.g. \citet{boucheron2015}, \citet{carpentier2014adaptive}, \citet{hall1985}.
\end{remark}



\subsection{\textsc{Robust UCB} algorithm \citep{bubeck2013bandits}}
\label{guarantees_RUCB}
For the sake of completeness, we recall below the main feature of \textsc{Robust UCB}  and make explicit its theoretical guarantees in our setting. The  bound stated  in the following proposition is a direct consequence of the regret analysis conducted by \cite{bubeck2013bandits}.
\begin{proposition}
	\label{propTk}
	Applying the \textsc{Robust UCB} algorithm of~\citep{bubeck2013bandits}
	to our MAB problem, the expected number of
	times we pull any suboptimal arm $k\neq k^\ast$ is upper bounded as follows
	\begin{equation*}
	\mathbb{E}[T_{k, n}] = \mathcal{O}\left(\log n\right)\ .
	\end{equation*}
\end{proposition}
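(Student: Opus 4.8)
The plan is to invoke the standard gap-dependent regret bound for \textsc{Robust UCB} (Theorem~1 in \citet{bubeck2013bandits}), which states that under the finite $(1+\epsilon)$-moment condition \eqref{eps_v}, for every suboptimal arm $k$ the expected number of pulls satisfies $\mathbb{E}[T_{k,n}] = \mathcal{O}\!\left(\bigl(v^{1/\epsilon}/\Delta_k^{(1+\epsilon)/\epsilon}\bigr)\log n\right)$, where $\Delta_k = \mathbb{E}[Y_{k^\ast,1}] - \mathbb{E}[Y_{k,1}]$ is the suboptimality gap in the truncated MAB problem. Thus the only thing to check is that this gap is a strictly positive constant (independent of $n$), so that the $\Delta_k$-dependent prefactor is $\mathcal{O}(1)$ and the bound collapses to $\mathcal{O}(\log n)$.

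First I would fix the threshold $u$ to satisfy \eqref{threshold_u}, so that \cref{lemma_threshold_u} applies and $k^\ast = \argmin_k \alpha_k$ is indeed the unique best arm for the truncated problem; in particular $\Delta_k > 0$ for every $k \neq k^\ast$. Next I would verify that \eqref{eps_v} holds: choosing any $\epsilon \in (0,1]$ with $1+\epsilon < \alpha_{(1)}$, the second-order Pareto assumption \eqref{eq:def_pareto} gives $\mathbb{P}(Y_{k,1} > x) = \mathbb{P}(X_{k,1} > x) \le (C_k + C' x^{-\alpha_k\beta_k})x^{-\alpha_k}$ for $x \ge u > 1$, whence $\mathbb{E}[|Y_{k,1}|^{1+\epsilon}] = u^{1+\epsilon}\mathbb{P}(X_{k,1}>u) + (1+\epsilon)\int_u^\infty x^{\epsilon}\mathbb{P}(X_{k,1}>x)\,\mathrm{d}x < \infty$ uniformly in $k$, giving a finite constant $v$ that depends only on $u$, $\epsilon$ and $(\alpha_k,\beta_k,C_k)_k, C'$. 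With $u$, $\epsilon$, $v$ and the $\Delta_k$'s all being fixed constants not depending on $n$, the \textsc{Robust UCB} bound yields $\mathbb{E}[T_{k,n}] = \mathcal{O}(\log n)$ for each suboptimal $k$.

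The only mildly delicate point is making sure the truncated rewards $Y_{k,t} = X_{k,t}\mathbbm{1}\{X_{k,t}>u\}$ genuinely fall within the scope of \citet{bubeck2013bandits}: they are nonnegative, have the required finite $(1+\epsilon)$-moment, and the horizon-$n$ version of the algorithm (with the truncation parameter set via $v$ and $\epsilon$) delivers the logarithmic regret guarantee; the atom of mass that $Y_{k,t}$ has at $0$ poses no problem since the analysis only uses the moment bound. I would therefore not expect any real obstacle here — the statement is essentially a corollary of \cref{lemma_threshold_u} together with an off-the-shelf theorem — the only care needed is to record explicitly that all the constants ($u$, $\epsilon$, $v$, $\Delta_k$) are $n$-independent so that the $\mathcal{O}(\log n)$ rate is clean.
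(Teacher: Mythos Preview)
Your proposal is correct and takes essentially the same approach as the paper: the paper's proof is literally the single line ``See proof of Proposition~1 in \citet{bubeck2013bandits}.'' Your write-up is simply a more explicit unpacking of why that citation applies---checking \eqref{eps_v}, invoking \cref{lemma_threshold_u} to ensure $\Delta_k>0$, and noting all constants are $n$-independent---which is exactly the right sanity check even if the paper skips it.
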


\begin{proof}
	See proof of Proposition 1 in \citet{bubeck2013bandits}.

\end{proof}

Hence, in expectation, \textsc{Robust UCB} pulls fewer times suboptimal arms than \textsc{ExtremeETC/Hunter}.
Indeed with \textsc{ExtremeETC/Hunter}, $T_{k, n}\ge
N=\Theta((\log n)^{2(2b+1)/b})$.

\begin{remark}
	\label{remarkRUCB_guarantees}
\cref{propTk} may be an indication that the Robust UCB approach performs better than \textsc{ExtremeETC/Hunter}. Nevertheless, guarantees on its expected extreme regret
require sharp concentration bounds on $T_{k, n}$ ($k\neq k^\ast$), which is out of the scope of this paper and left for future work.
\end{remark}

\begin{algorithm}[H]
	\caption{\textsc{Robust UCB} with truncated mean estimator
	\citep{bubeck2013bandits}}
	\begin{algorithmic}[1]
		\STATE {\bfseries Input:} $u>0$ s.t. \cref{threshold_u}, $\epsilon\in
		(0, 1]$ and $v>0$ s.t. \cref{eps_v}.
		\STATE {\bfseries Initialize:} Pull each arm once.
		\STATE {\bfseries for} $t\ge K+1$ {\bfseries do}
		\STATE \quad {\bfseries for} $k=1, \dots, K$ {\bfseries do}
		\STATE \qquad Update truncated mean estimator \\ $\widehat{\mu_k}
		\leftarrow \frac{1}{T_{k, t-1}}\sum_{s=1}^{t-1} Y_{k,
		s}\mathbbm{1}\left\{I_s=k, Y_{k,
		s}\le \bigl(\frac{v T_{k, s}}{\log(t^2)}\bigr)^\frac{1}{1+\epsilon}\right\}$\\
		\STATE \qquad Update index \\ $B_k \leftarrow \widehat{\mu_k} +
		4v^{1/(1+\epsilon)}\left(\frac{\log t^2}{T_{k, t-1}}\right)^{\epsilon/(1+\epsilon)}$
		\STATE \quad {\bfseries end for}
		\STATE \quad Play arm $I_t = \argmax_{1\le k\le K}B_k$
		\STATE {\bfseries end for}
	\end{algorithmic}
\end{algorithm}


\section{Numerical experiments}
\label{experiments}

In order to illustrate some aspects of the theoretical results presented previously,
	we consider a time horizon $n=10^5$ with $K=3$ arms and exact Pareto distributions with parameters given in \cref{param_exp}.
	Here, the optimal arm is the second one (incidentally, the distribution with highest mean is the first one).

\begin{table}
\begin{center}
\begin{tabular}{l | l l l}
\toprule
\ & \multicolumn{3}{c}{Arms}\\
\  & \textbf{$\quad k=1$} & \textcolor{red}{\textbf{$\quad k^\ast=2$}} & \textbf{$\quad k=3$} \\
\midrule
$\alpha_k$ & $\quad 15$ & $\quad 1.5$ & $\quad 10$ \\
$C_k$ & $\quad 10^8$ & $\quad 1$ & $\quad 10^5$ \\
$\mathbb{E}\left[X_{k, 1}\right]$ & $\quad 3.7$ & $\quad 3$ & $\quad 3.5$ \\
$\mathbb{E}\left[\max_{1\le t\le n}X_{k, t}\right]$ & $\quad 7.7$ & $\quad \textcolor{red}{5.8\cdot 10^3}$ & $\quad 11$ \\
\bottomrule
\end{tabular}
\vspace{.1in}
\caption{Pareto distributions used in the experiments.}\label{param_exp}
\end{center}
\end{table}

\begin{figure}
\begin{subfigure}{.5\textwidth}
  \centering
  \includegraphics[height=.7\linewidth]{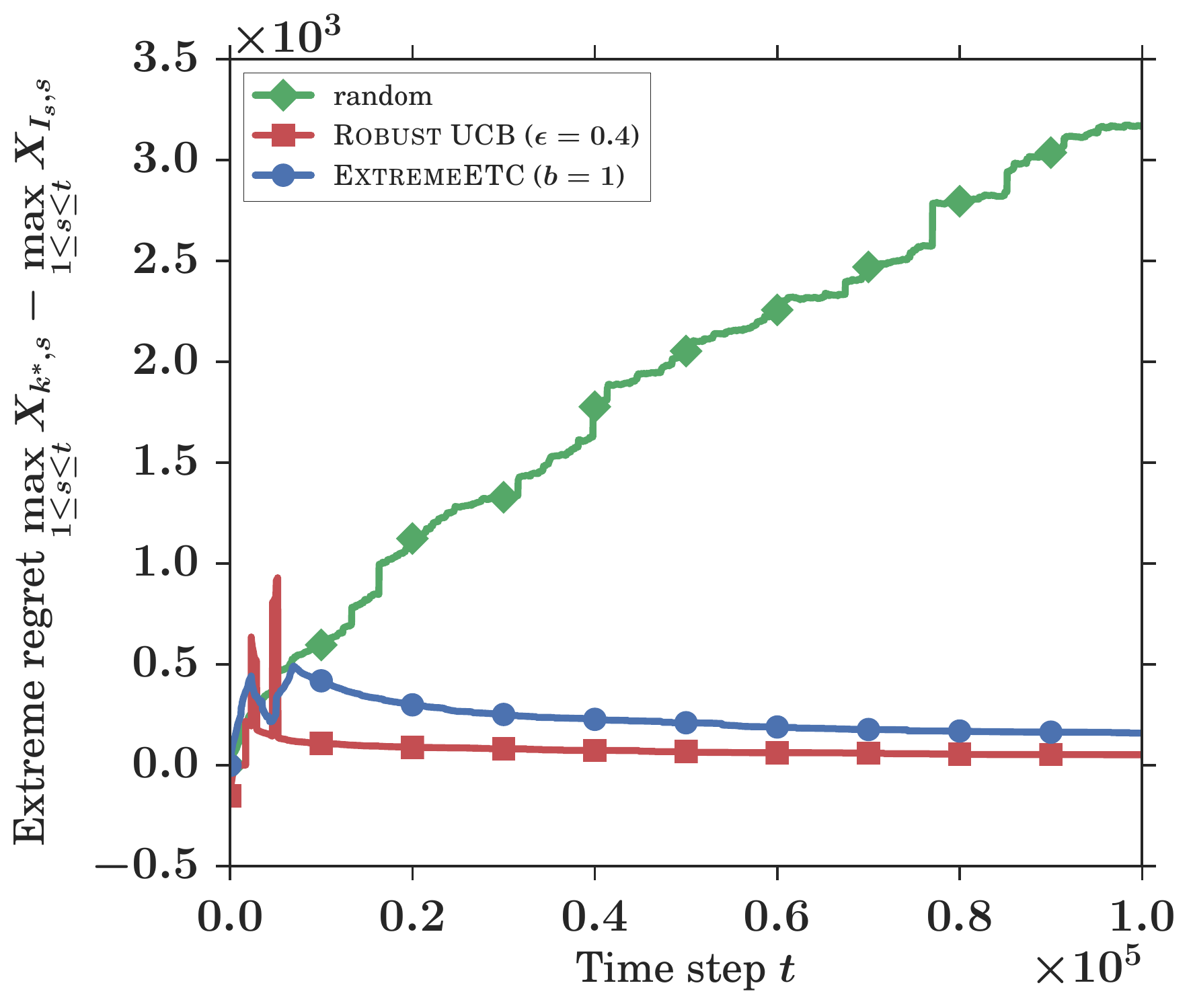}
  \caption{}
  \label{simu_normal}
\end{subfigure}%
\begin{subfigure}{.5\textwidth}
  \centering
  \includegraphics[height=.7\linewidth]{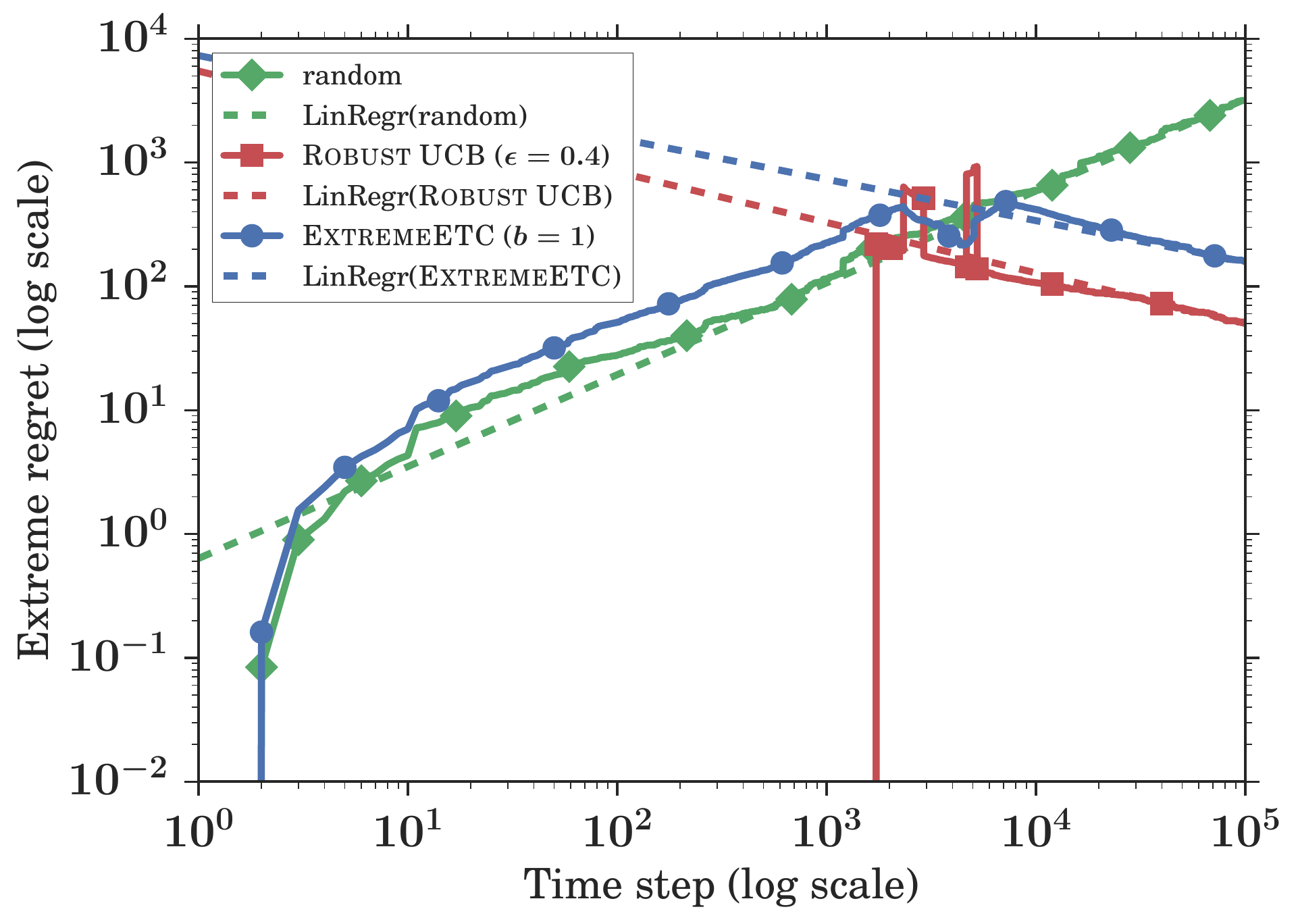}
  \caption{}
  \label{simu_loglog}
\end{subfigure}
\caption{Averaged extreme regret (over $1000$ independent simulations)
for \textsc{ExtremeETC}, \textsc{Robust UCB} and a uniformly random strategy. \cref{simu_loglog} is the log-log scaled counterpart of \cref{simu_normal} with linear regressions computed over $t=5\cdot 10^4, \dots, 10^5$.}
\label{simu}
\end{figure}

We have implemented \textsc{Robust UCB} with parameters $\epsilon=0.4$, which satisfies $1+\epsilon<\alpha_2=1.5$, $v$ achieving the equality in \cref{eps_v} (ideal case) and a threshold $u$ equal to the lower bound in \cref{threshold_u} plus $1$ to respect the strict inequality.
\textsc{ExtremeETC} is runned with $b=1<+\infty=\min_{1\le k\le K}\beta_k$.
In this setting, the most restrictive condition on the time horizon, $n > KN \approx 7000$ (given by \cref{N}), is checked, which places us in the validity framework of \textsc{ExtremeETC}.
The resulting strategies are compared to each other and to the random strategy pulling each arm uniformly at random, but not to \textsc{Threshold Ascent} algorithm \citep{streeter2006simple} which is designed only for bounded rewards.
Precisely, $1000$ simulations have been run and Figure~\ref{simu} depicts the extreme regret~\eqref{eq:def_regret} in each setting averaged over these $1000$ trajectories.
These experiments empirically support the theoretical bounds in \cref{upper}: the expected extreme regret of \textsc{ExtremeETC} converges to zero for large horizons.
On the log-log scale (\cref{simu_loglog}), \textsc{ExtremeETC}'s extreme regret starts linearly decreasing after the initialization phase, at $n > KN\approx 7000$,
which is consistent with \cref{lemTast}.
The corresponding linear regression reveals a slope $\approx -0.333$ (with a coefficient of determination $R^2\approx 0.97$),
which confirms \cref{theo:regretExtremeHunter} and \cref{thm_lower} yielding the theoretical slope $-(1-1/\alpha_{k^\ast})=-1/3$.


\section{Conclusion}

This paper brings two main contributions. It first provides
a refined regret bound analysis of the performance of the \textsc{ExtremeHunter} algorithm in the context of the max $K$-armed
bandit problem that significantly improves upon the results obtained in the seminal
contribution \cite{carpentier2014extreme}, also proved to be valid for \textsc{ExtremeETC}, a computationally appealing alternative we introduce.
In particular, the obtained upper
bound on the regret converges to zero for large horizons and is shown
to be tight when the tail of the rewards is sufficiently close to a
Pareto tail (second order parameter $b\ge 1$).  On the other hand, this paper
offers a novel view of this approach, interpreted here as a specific
version of a classical solution (\textit{Robust UCB}) of the MAB
problem, in the situation when only very large rewards matter.

Based on these encouraging results, several lines of further research
can be sketched. In particular, future work will investigate to which
extent the lower bound established for \textsc{ExtremeETC/Hunter}
holds true for any strategy with exploration
stage of the same duration, and whether  improved  performance is
achievable with alternative stopping
criteria for the exploration stage.

\section*{Acknowledgments}
This work was supported by a public grant (\textit{Investissement d'avenir} project, reference ANR-11-LABX-0056-LMH,
LabEx LMH) and by the industrial chair \textit{Machine Learning for Big Data} from Télécom ParisTech.

\bibliographystyle{apalike}
\bibliography{bib}

\begin{thebibliography}{}

\bibitem[Auer et~al., 2002]{auer2002finite}
Auer, P., Cesa-Bianchi, N., and Fischer, P. (2002).
\newblock Finite-time analysis of the multiarmed bandit problem.
\newblock {\em Machine learning}, 47(2-3):235--256.

\bibitem[Boucheron and Thomas, 2015]{boucheron2015}
Boucheron, S. and Thomas, M. (2015).
\newblock Tail index estimation, concentration and adaptivity.
\newblock {\em Electron. J. Statist.}, 9(2):2751--2792.

\bibitem[{Bubeck} et~al., 2013]{bubeck2013bandits}
{Bubeck}, S., {Cesa-Bianch}i, N., and {Lugosi}, G. (2013).
\newblock Bandits with heavy tail.
\newblock {\em IEEE Transactions on Information Theory}, 59(11):7711--7717.

\bibitem[{Carpentier} and {Kim}, 2014]{carpentier2014adaptive}
{Carpentier}, A. and {Kim}, A.~K. (2014).
\newblock Adaptive and minimax optimal estimation of the tail coefficient.
\newblock {\em Statistica Sinica}, 25:1133--1144.

\bibitem[{Carpentier} et~al., 2014]{carpentier2014honest}
{Carpentier}, A., {Kim}, A.~K., et~al. (2014).
\newblock Honest and adaptive confidence interval for the tail coefficient in
  the pareto model.
\newblock {\em Electronic Journal of Statistics}, 8(2):2066--2110.

\bibitem[{Carpentier} and {Valko}, 2014]{carpentier2014extreme}
{Carpentier}, A. and {Valko}, M. (2014).
\newblock Extreme bandits.
\newblock In {\em Advances in Neural Information Processing Systems 27}, pages
  1089--1097. Curran Associates, Inc.

\bibitem[Carpentier and Valko, 2015]{carpentier2015simple}
Carpentier, A. and Valko, M. (2015).
\newblock Simple regret for infinitely many armed bandits.
\newblock In {\em Proceedings of The 32nd International Conference on Machine
  Learning}, pages 1133--1141.

\bibitem[Cicirello and Smith, 2005]{cicirello2005aaai}
Cicirello, V.~A. and Smith, S.~F. (2005).
\newblock The max k-armed bandit: A new model of exploration applied to search
  heuristic selection.
\newblock In {\em The Proceedings of the Twentieth National Conference on
  Artificial Intelligence}, volume~3, pages 1355--1361. AAAI Press.

\bibitem[David and Shimkin, 2016]{david2016pac}
David, Y. and Shimkin, N. (2016).
\newblock Pac lower bounds and efficient algorithms for the max k-armed bandit
  problem.
\newblock In {\em Proceedings of The 33nd International Conference on Machine
  Learning}.

\bibitem[Hall and Welsh, 1985]{hall1985}
Hall, P. and Welsh, A.~H. (1985).
\newblock Adaptive estimates of parameters of regular variation.
\newblock {\em Ann. Statist.}, 13(1):331--341.

\bibitem[Lepski\u\i, 1990]{MR1091202}
Lepski\u\i, O.~V. (1990).
\newblock A problem of adaptive estimation in {G}aussian white noise.
\newblock {\em Teor. Veroyatnost. i Primenen.}, 35(3):459--470.

\bibitem[{Nishihara} et~al., 2016]{nishihara2016no}
{Nishihara}, R., {Lopez-Paz}, D., and {Bottou}, L. (2016).
\newblock No regret bound for extreme bandits.
\newblock In {\em Proceedings of the 19th International Conference on
  Artificial Intelligence and Statistics (AISTATS)}.

\bibitem[Resnick, 2007]{resnick2007heavy}
Resnick, S. (2007).
\newblock {\em Heavy-Tail Phenomena: Probabilistic and Statistical Modeling}.
\newblock Number vol.~10 in Heavy-tail Phenomena: Probabilistic and Statistical
  Modeling. Springer.

\bibitem[Streeter and Smith, 2006]{streeter2006simple}
Streeter, M.~J. and Smith, S.~F. (2006).
\newblock A simple distribution-free approach to the max k-armed bandit
  problem.
\newblock In {\em International Conference on Principles and Practice of
  Constraint Programming}, pages 560--574. Springer.

\end{thebibliography}

\newpage

\appendix
\onecolumn

\section{Appendix}
\label{appendix}

\subsection{Proof of \cref{lem:Q5}}

\begin{proof}
  For $n>Q_3$ (defined in \cref{def:Q3}),
  one has $V_{k^\ast}>2\max_{k\neq k^\ast}V_k$, which implies that $\max_{k\neq k^\ast}V_k/(V_{k^\ast}-V_k) < 1$.
  Hence
  \[
  \max_{k\neq k^\ast}e^{\big(F\sqrt{1+\rho}A_0^{-b/(2b+1)}\frac{V_k}{V_{k^\ast}-V_k}\big)^2}<e^{\left(F\sqrt{1+\rho}A_0^{-b/(2b+1)}\right)^2}\le Q_5\ .
  \]
  Then, as $T_{k, t}\ge N$ and by definitions of $N$ (\cref{N}) and $\delta_0$ (\cref{delta0}), we have for $n>Q_5$
  that for any suboptimal arm $k\neq k^\ast$
  \begin{equation*}
    \begin{split}
    &(C_k n)^{1/\alpha_k}\Gamma(1-1/\alpha_k)
    \left(1+F\log n \sqrt{\log(n/\delta_0)}T_{k, t}^{-b/(2b+1)}\right)\\
    &<(C_ {k^\ast} n)^{1/\alpha_{k^\ast}}\Gamma(1-1/\alpha_{k^\ast})\, ,
  \end{split}
  \end{equation*}
which implies, using \cref{boundsBk}, that under $\xi_1$: $B_{k, t}<B_{k^\ast, t}$ for $t>KN$.
\end{proof}

\subsection{Proof of \cref{upper}}

\begin{proof}
	We want to upper bound
	$\mathbb{E}[R_n]=\mathbb{E}[G_n^{(k^\ast)}]-\mathbb{E}[G_n^{(\pi)}]$.
	To do so, we lower bound $\mathbb{E}[G_n^{(\pi)}]$ as follows
	\begin{equation*}
		\mathbb{E}\left[G_n^{(\pi)}\right]
		=\mathbb{E}\left[\max_{t\le n}X_{I_t, t}\right]
		\ge \mathbb{E}\left[\max_{\{t\le n, I_t=k^\ast\}}X_{I_t, t}\right]=\mathbb{E}\left[\max_{\{i\le T_{k^\ast, n}\}}\widetilde{X}_{k^\ast, i}\right].
	\end{equation*}
	Thus
	\begin{equation*}
		\mathbb{E}\left[G_n^{(\pi)}\right]
		\ge \mathbb{E}\left[\max_{i\le n-(K-1)N}\widetilde{X}_{k^\ast, i}\mathbbm{1}\{\xi_1\}\right]\, ,
	\end{equation*}
	where we used that under $\xi_1$, $T_{k^\ast, n}=n-(K-1)N$.
	Now we call the following result (\cref{lemma_upper}, proved in \cref{subsec:proof_lemma_upper}), giving a lower bound on the expected maximum of i.i.d. second order Pareto r.v. given some event.

	\begin{lemma}
	\label{lemma_upper}
	Let $X_1, ..., X_T$ be i.i.d. samples from an $(\alpha, \beta, C, C')$-second
	order Pareto distribution.
	Let $\xi$ be an event of probability larger than $1-\delta$.
	If $\delta<1/2$ and $T\ge\max\left(4c, (4c)^{1/\beta}
	\log(2)C(2C')^{1/\beta}, 8\log^2(2)\right)$ for a given constant $c$ depending
	only on $\beta, C$ and $C'$, we have
	\begin{equation*}
	\begin{split}
	&\mathbb{E}\left[\max_{1\le i\le T}X_i\mathbbm{1}\{\xi\}\right]
	\ge (TC)^{1/\alpha}\Gamma\left(1-\frac{1}{\alpha}\right)
	-\left(4+\frac{8}{\alpha-1}\right)(TC)^{1/\alpha}\delta'^{1-1/\alpha}\\
	&-2\left(\frac{4D_2C^{1/\alpha}}{T^{1-1/\alpha}}+\frac{2C'D_{\beta+1}}{C^{\beta+1-1/\alpha}T^{\beta-1/\alpha}}
	+2(2C'T)^{1/(\alpha(1+\beta))}e^{-HT^{\beta/(\beta+1)}}\right).
	\end{split}
	\end{equation*}
	\end{lemma}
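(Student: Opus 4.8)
The plan is to split the truncated maximum as
\[
\mathbb{E}\left[\max_{1\le i\le T}X_i\mathbbm{1}\{\xi\}\right]
= \mathbb{E}\left[\max_{1\le i\le T}X_i\right] - \mathbb{E}\left[\max_{1\le i\le T}X_i\mathbbm{1}\{\bar{\xi}\}\right],
\]
where $\bar{\xi}$ is the complement of $\xi$. The first term is controlled directly by \cref{thm1}, which gives $\mathbb{E}[\max_{1\le i\le T}X_i]\ge (TC)^{1/\alpha}\Gamma(1-1/\alpha)-\mathrm{Err}(T)$, with $\mathrm{Err}(T)$ the three-term bound appearing in \cref{thm1}. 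So the entire task is to show that $\mathbb{E}[\max_{1\le i\le T}X_i\mathbbm{1}\{\bar{\xi}\}]$ is at most $(4+8/(\alpha-1))(TC)^{1/\alpha}\delta'^{\,1-1/\alpha}$ plus a quantity that can be folded into $\mathrm{Err}(T)$ — which is why the statement carries a factor $2$ in front of that block. A plain Hölder inequality at exponent $\alpha$ is unavailable since $\mathbb{E}[(\max_{1\le i\le T}X_i)^{\alpha}]=+\infty$, so I would use a truncation argument instead.

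Concretely, writing $Z:=\max_{1\le i\le T}X_i$ and introducing a level $M>0$, one has, since $\mathbb{P}(\bar{\xi})\le\delta$ and $Z\ge 0$,
\[
\mathbb{E}\big[Z\,\mathbbm{1}\{\bar{\xi}\}\big]\le M\,\mathbb{P}(\bar{\xi})+\mathbb{E}\big[Z\,\mathbbm{1}\{Z>M\}\big]\le M\delta+M\,\mathbb{P}(Z>M)+\int_M^\infty \mathbb{P}(Z>x)\,\mathrm{d}x .
\]
I would then bound $\mathbb{P}(Z>x)\le T\,\mathbb{P}(X_1>x)\le T\big(Cx^{-\alpha}+C'x^{-\alpha(1+\beta)}\big)$ by the union bound and \eqref{eq:def_pareto} (valid for $x\ge M$ once $M$ is large enough) and integrate termwise: the $Cx^{-\alpha}$ part yields a boundary term plus $TCM^{-(\alpha-1)}/(\alpha-1)$, i.e. $\tfrac{\alpha}{\alpha-1}TCM^{-(\alpha-1)}$ altogether, while the $C'x^{-\alpha(1+\beta)}$ part yields a term of order $C'T\,M^{-(\alpha(1+\beta)-1)}$.

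The key step is to choose $M\asymp(\alpha TC/\delta)^{1/\alpha}$, which balances $M\delta$ against $\tfrac{\alpha}{\alpha-1}TCM^{-(\alpha-1)}$: both become constant multiples of $(TC)^{1/\alpha}\delta^{1-1/\alpha}$, and since $\alpha^{1/\alpha}\le e^{1/e}<2$ for $\alpha>1$ the three contributions $M\delta$, $M\,\mathbb{P}(Z>M)$ and $\int_M^\infty TCx^{-\alpha}\mathrm{d}x$ together carry a coefficient $\alpha^{1/\alpha}\big(1+\tfrac{1}{\alpha-1}\big)\le 2+\tfrac{2}{\alpha-1}$, comfortably inside the claimed $4+\tfrac{8}{\alpha-1}$ and leaving room for the remaining pieces; here $\delta'$ in the statement is simply $\delta$ (or any larger value, for which the bound holds a fortiori). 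For this $M$ the second-order contribution $C'T\,M^{-(\alpha(1+\beta)-1)}$ is of order $C'T^{1/\alpha-\beta}\delta^{(1+\beta)-1/\alpha}$, which — using $\delta<1/2$ and $(1+\beta)-1/\alpha>1-1/\alpha$ — is dominated by the term $\tfrac{2C'D_{\beta+1}}{C^{\beta+1-1/\alpha}T^{\beta-1/\alpha}}$ already present in $\mathrm{Err}(T)$. Combining with the lower bound on $\mathbb{E}[\max_{1\le i\le T}X_i]$ from \cref{thm1} then gives the claimed inequality.

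The main obstacle I anticipate is the bookkeeping at the truncation step. One has to verify that the near-optimal level $M$ lies in the range where the tail estimate \eqref{eq:def_pareto} is genuinely usable (so that, e.g., $TCx^{-\alpha}+TC'x^{-\alpha(1+\beta)}\le 1$ and the $C'$-part is dominated by the $C$-part for $x\ge M$); this is exactly where the lower bounds on $T$ in terms of the constant $c=c(\beta,C,C')$ enter, together with the hypothesis $\delta<1/2$, which forces $M\ge(2\alpha TC)^{1/\alpha}$. Carrying the numerical constants honestly through this step so as to land inside $4+8/(\alpha-1)$ is the only delicate point; the tail-integral computations themselves are routine.
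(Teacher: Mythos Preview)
Your approach is sound and will recover a bound of the stated form, but it is genuinely different from the paper's argument. The paper also starts from the decomposition $\mathbb{E}[Z\mathbbm{1}\{\xi\}]=\mathbb{E}[Z]-\mathbb{E}[Z\mathbbm{1}\{\bar\xi\}]$, but then introduces the exact $(1-\delta)$-quantile $x_\delta$ of $Z=\max_{i\le T}X_i$, splits $\int_0^\infty \mathbb{P}(Z\mathbbm{1}\{\bar\xi\}>x)\,\mathrm{d}x$ at $x_\delta$, and brackets $x_\delta$ between explicit values $x_-$ and $x_+$ via Lemma~3 of \citet{carpentier2014extreme}. The tail piece $\int_{x_-}^\infty \mathbb{P}(Z>x)\,\mathrm{d}x$ is then handled not by the union bound but by the Fr\'echet approximation underlying \cref{thm1}: one replaces $\mathbb{P}(Z>x)$ by $1-e^{-TCx^{-\alpha}}$ at the cost of one copy of $\mathrm{Err}(T)$, and the residual integral $\int_{x_-}^\infty(1-e^{-TCx^{-\alpha}})\,\mathrm{d}x$ together with $\delta x_+$ produces the $(4+8/(\alpha-1))(TC)^{1/\alpha}\delta^{1-1/\alpha}$ term via estimates already in \citet{carpentier2014extreme}. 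A second copy of $\mathrm{Err}(T)$ then comes from lower-bounding $\mathbb{E}[Z]$, which is exactly why the factor $2$ appears.

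The trade-off is this: your route is more elementary and self-contained (no external quantile lemma, no Fr\'echet substitution in the tail), but the factor $2$ is less transparent, since you must separately argue that the second-order contribution $C'T\,M^{-(\alpha(1+\beta)-1)}$ is dominated by the $D_{\beta+1}$ term of $\mathrm{Err}(T)$; this is true in order but the constant matching needs care. The paper's route makes the factor $2$ mechanical (two invocations of \cref{thm1}), at the price of importing the quantile bracketing from \citet{carpentier2014extreme} and using \cref{thm1} at the level of its proof rather than just its statement. Your identification of $\delta'$ with $\delta$ is correct; this is a typo in the lemma statement.
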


	Then, applying \cref{lemma_upper} with $\xi=\xi_1$ and $\delta=\delta_0$ we obtain after simplification
	\begin{equation*}
		\begin{split}
		\mathbb{E}[R_n]
		\le H'n^{1/\alpha_{k^\ast}}\Bigl\{&\frac{1}{n}+\frac{1}{n^b}+\frac{K}{n}(\log n)^{2(2b+1)/b}
		+\delta_0^{1-1/\alpha_{k^\ast}}\\
		&+n^{1/(\alpha_{k^\ast}(1+\beta_{k^\ast}))}e^{-H_{k^\ast}(n/2)^{\beta/(\beta+1)}}\Bigr\}\, ,
	\end{split}
	\end{equation*}
	where $H_{k^\ast}=\frac{1}{2}C_{k^\ast}(2C')^{1/(\alpha_{k^\ast}(1+\beta_{k^\ast}))}$ and
	$H'$ is a constant depending only on $(\alpha_k, \beta_k, C_k)_{1\le k\le
	K}$ and C'.
	The definition of $\delta_0$ concludes the proof.

\end{proof}

\subsection{Proof of \cref{lemma_upper}}
\label{subsec:proof_lemma_upper}

\begin{proof}

We follow the proof of Lemma 2 in \citet{carpentier2014extreme} except that we use
\cref{thm1} instead of their Theorem 1.
Let $x_{\delta}$ be such that $\mathbb{P}(\max_{1\le t\le T}X_t\le
x_{\delta})=1-\delta$.
Then we have
\begin{equation*}
\begin{split}
&\mathbb{E}\left[\max_{1\le t\le T}X_t\mathbbm{1}\{\xi\}\right]
= \mathbb{E}\left[\max_{1\le t\le T}X_t\right]-\mathbb{E}\left[\max_{1\le t\le
T}X_t\mathbbm{1}\{\bar{\xi} \}\right]\\
& = \mathbb{E}\left[\max_{1\le t\le T}X_t\right] -
\int_0^{x_{\delta}}\mathbb{P}\left(\max_{1\le t\le
T}X_t\mathbbm{1}\{\bar{\xi} \}>x\right)  \, \mathrm{d}x\\
&\qquad\qquad\qquad\qquad- \int_{x_{\delta}}^\infty\mathbb{P}\left(\max_{1\le t\le
T}X_t\mathbbm{1}\{\bar{\xi} \}>x\right)  \, \mathrm{d}x\\
& \ge \mathbb{E}\left[\max_{1\le t\le T}X_t\right] - \delta x_{\delta}
- \int_{x_{\delta}}^\infty\mathbb{P}\left(\max_{1\le t\le
T}X_t\mathbbm{1}\{\bar{\xi} \}>x\right)  \, \mathrm{d}x\, ,
\end{split}
\end{equation*}
where the inequality comes from $\mathbb{P}\left(\max_{1\le t\le T}X_t\mathbbm{1}\{\bar{\xi} \}>x\right)\le
\mathbb{P}\left(\bar{\xi} \right)\le \delta$.
Since $T\ge \log(2)\max\left(C(2C')^{1/\beta}, 8\log(2)\right)$ and $\delta<1/2$, we have
from Lemma 3 in \citet{carpentier2014extreme}
\begin{equation*}
\begin{split}
&\left|\mathbb{P}\left(\max_{1\le i\le T}X_i \le (TC/\log(1/(1-\delta)))^{1/\alpha}\right)-(1-\delta)\right|\\
&\le (1-\delta)\left(\frac{4}{T}\left(\log\frac{1}{1-\delta}\right)^2
+\frac{2C'}{C^{1+\beta}}\left(\log\frac{1}{1-\delta}\right)^{1+\beta}\right)\\
&\le \frac{4}{T}(2\delta)^2+\frac{2C'}{C^{1+\beta}}(2\delta)^{1+\beta}
\le c\delta\max\left(\frac{\delta}{T}, \frac{\delta^\beta}{T^\beta}\right)
\le c\delta\max\left(\frac{1}{T}, \frac{1}{T^\beta}\right)\, ,
\end{split}
\end{equation*}
where $c$ is a constant that depends only on $C, C'$ and $\beta$.
As we have $c\max(T^{-1}, T^{-\beta})\le 1/4$, this implies
\begin{equation*}
x_{-}=(TC/\log(1/(1-2\delta)))^{1/\alpha}
\le x_{\delta}
\le (TC/\log(1/(1-\delta/2)))^{1/\alpha}= x_{+}\ .
\end{equation*}
It follows
\begin{equation*}
\mathbb{E}\left[\max_{1\le t\le T}X_t\mathbbm{1}\{\xi\}\right]
\ge \mathbb{E}\left[\max_{1\le t\le T}X_t\right] - \delta x_{+} -
\int_{x_{-}}^\infty\mathbb{P}\left(\max_{1\le t\le T}X_t>x\right)  \, \mathrm{d}x\ .
\end{equation*}
From \cref{thm1} we deduce
\begin{equation*}
\begin{split}
&\mathbb{E}\left[\max_{1\le t\le T}X_t\mathbbm{1}\{\xi\}\right]
\ge \mathbb{E}\left[\max_{1\le t\le T}X_t\right] - \delta x_{+}
- \int_{x_{-}}^\infty (1-e^{-TCx^{-\alpha}})  \, \mathrm{d}x\\
&-\left(\frac{4D_2C^{1/\alpha}}{T^{1-1/\alpha}}+\frac{2C'D_{\beta+1}}{C^{\beta+1-1/\alpha}T^{\beta-1/\alpha}}
+2(2C'T)^{1/(\alpha(1+\beta))}e^{-HT^{\beta/(\beta+1)}}\right).
\end{split}
\end{equation*}
From the proof of Lemma 2 in \citet{carpentier2014extreme} we have for $\delta$ small
enough
\begin{equation*}
\int_{x_{-}}^\infty (1-e^{-TCx^{-\alpha}})  \, \mathrm{d}x
\le \frac{8}{\alpha-1}(TC)^{1/\alpha}\delta^{1-1/\alpha}
\end{equation*}
and
\begin{equation*}
\delta x_{+}\le 4(TC)^{1/\alpha}\delta^{1-1/\alpha}\ .
\end{equation*}
\cref{thm1} concludes the proof.

\end{proof}

\subsection{Proof of \cref{power}}

\begin{proof}

	Let $F$ and $F_r$ be respectively the cumulative distribution functions of
	$X$ and $X^r$. For $x\ge 0$,
	\begin{equation*}
	F_r(x)=\mathbb{P}(X^r\le x) = \mathbb{P}(X\le x^{1/r}) = F(x^{1/r})\ .
	\end{equation*}
	As $X$ follows an $(\alpha, \beta, C, C')$-second order Pareto distribution
	we have
	\begin{equation*}
	|1-Cx^{-\alpha/r}-F_r(x)|
	=|1-Cx^{-\alpha/r}-F(x^{1/r})|
	\le C'x^{-(\alpha/r)(1+\beta)}\, ,
	\end{equation*}
	which concludes the proof.

\end{proof}

\subsection{Proof of \cref{xi12}}

\begin{proof}
	We first state the following result (\cref{high_prob_bounds}, proved in \cref{proof_high_prob_bounds}), yielding high probability lower and upper bounds for the maximum of i.i.d. second order Pareto r.v.

	\begin{lemma}
		\label{high_prob_bounds}
		For $X_1, ..., X_T$ i.i.d samples drawn from an
		$(\alpha,\beta,C,C')$-second-order Pareto distribution we define high
		probability lower and upper bound
		\begin{equation*}
			\ell(T,\delta)= \left(\frac{TC}{2\log\frac{1}{\delta}}\right)^{1/\alpha}\quad \text{and}\quad L(T,\delta)= \left(\frac{4TC}{\log\frac{1}{1-\delta}}\right)^{1/\alpha}\, ,
		\end{equation*}
		where $\delta\in(0,1)$ can depend on $T$ and is such that
		$\lim_{T\rightarrow\infty}\ell(T,\delta)=\infty$ and
		$\lim_{T\rightarrow\infty}L(T,\delta)=\infty$.
		For $T$ large enough such that $C\ell(T,\delta)^{-\alpha}\ge
		2C'\ell(T,\delta)^{-\alpha(1+\beta)}$, $CL(T,\delta)^{-\alpha}\ge
		C'L(T,\delta)^{-\alpha(1+\beta)}$ and $L(T,\delta)^{-\alpha}\le \frac{1}{4C}$
		we have
		\begin{equation}
			\label{proba_lL}
			\mathbb{P}\left(\max_{1\le i\le T}X_i\le \ell(T,\delta)\right)\le\delta \quad \text{and}\quad \mathbb{P}\left(\max_{1\le i\le T}X_i\ge L(T,\delta)\right)\le\delta.
		\end{equation}
	\end{lemma}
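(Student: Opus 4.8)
The plan is to prove \cref{high_prob_bounds} directly from the second-order Pareto tail bound \eqref{eq:def_pareto}, using the standard fact that $\mathbb{P}(\max_{1\le i\le T}X_i\le x)=F(x)^T$ together with elementary inequalities relating $\log F(x)$ to $1-F(x)$. Throughout, write $\bar F(x)=1-F(x)$ so that \eqref{eq:def_pareto} gives $|{\bar F(x)-Cx^{-\alpha}}|\le C'x^{-\alpha(1+\beta)}$ for all $x\ge 0$, and note that the hypotheses on $T$ are precisely what is needed to absorb the $C'$-correction into a constant multiple of the leading $Cx^{-\alpha}$ term when $x\in\{\ell(T,\delta),L(T,\delta)\}$.

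First I would establish the lower-bound half: $\mathbb{P}(\max_i X_i\le \ell)\le\delta$, where $\ell=\ell(T,\delta)=(TC/(2\log(1/\delta)))^{1/\alpha}$. Since $\mathbb{P}(\max_i X_i\le \ell)=F(\ell)^T=(1-\bar F(\ell))^T\le e^{-T\bar F(\ell)}$, it suffices to show $T\bar F(\ell)\ge \log(1/\delta)$. Using \eqref{eq:def_pareto}, $\bar F(\ell)\ge C\ell^{-\alpha}-C'\ell^{-\alpha(1+\beta)}$, and the assumed inequality $C\ell^{-\alpha}\ge 2C'\ell^{-\alpha(1+\beta)}$ gives $\bar F(\ell)\ge \tfrac12 C\ell^{-\alpha}$. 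Plugging in $\ell^{-\alpha}=2\log(1/\delta)/(TC)$ yields $T\bar F(\ell)\ge \tfrac12 C\cdot T\cdot 2\log(1/\delta)/(TC)=\log(1/\delta)$, hence $F(\ell)^T\le e^{-\log(1/\delta)}=\delta$, as claimed.

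Next I would handle the upper-bound half: $\mathbb{P}(\max_i X_i\ge L)\le\delta$, where $L=L(T,\delta)=(4TC/\log(1/(1-\delta)))^{1/\alpha}$. Here $\mathbb{P}(\max_i X_i\ge L)=1-F(L)^T\le T\bar F(L)$ by a union bound (or by $1-(1-p)^T\le Tp$), so it suffices to show $T\bar F(L)\le\log(1/(1-\delta))$... but one must be slightly more careful, since $\log(1/(1-\delta))\le\delta$ only fails in the wrong direction; instead use $1-F(L)^T=1-(1-\bar F(L))^T$ and the bound $1-(1-p)^T\le 1-e^{-2Tp}\le 2Tp$ valid when $p\le 1/2$ — the hypothesis $L^{-\alpha}\le 1/(4C)$ together with $CL^{-\alpha}\ge C'L^{-\alpha(1+\beta)}$ ensures $\bar F(L)\le 2CL^{-\alpha}\le 1/2$. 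From \eqref{eq:def_pareto}, $\bar F(L)\le CL^{-\alpha}+C'L^{-\alpha(1+\beta)}\le 2CL^{-\alpha}=2C\cdot\log(1/(1-\delta))/(4TC)=\log(1/(1-\delta))/(2T)$, so $T\bar F(L)\le\tfrac12\log(1/(1-\delta))\le\tfrac12\cdot\delta/(1-\delta)$; combined with $1-(1-p)^T\le Tp/(1-p)^{\,}$-type estimates this gives the bound $\le\delta$ after using $\delta<1$. The cleanest route is: $1-F(L)^T\le T\bar F(L)\le\tfrac12\log\tfrac1{1-\delta}\le\log\tfrac1{1-\delta}\le\delta\cdot\frac{1}{1-\delta}$ is not quite $\le\delta$, so I would instead track constants so that the factor $4$ in the definition of $L$ leaves exactly enough slack; the honest statement is $1-F(L)^T\le T\bar F(L)\le \log\frac{1}{1-\delta}\cdot\frac12\cdot\frac{CL^{-\alpha}\cdot 2}{CL^{-\alpha}\cdot 2}$, and since $-\log(1-\delta)\ge\delta$ is false while $\le 2\delta$ holds for $\delta\le 1/2$, one recovers $\le\delta$ from the factor $4$ versus the factor $2$ in these estimates.

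The main obstacle I anticipate is purely bookkeeping: matching the numerical constants ($2$ in $\ell$, $4$ in $L$) against the slack produced by the inequalities $e^{-p}\ge 1-p$, $1-(1-p)^T\le Tp$, and $-\log(1-\delta)\in[\delta,2\delta]$ for $\delta\le 1/2$, while simultaneously invoking the three technical side-conditions on $T$ to discard the second-order $C'$ terms. No conceptual difficulty arises; the lemma is a routine consequence of $\mathbb{P}(\max_i X_i\le x)=F(x)^T$ and \eqref{eq:def_pareto}. I would present the two halves as two short displayed chains of inequalities, flagging exactly where each hypothesis of the lemma is used, and remark that the condition $\lim_{T\to\infty}\ell(T,\delta)=\lim_{T\to\infty}L(T,\delta)=\infty$ guarantees that the side-conditions on $T$ are eventually satisfied whenever $\delta=\delta(T)$ does not decay too fast.
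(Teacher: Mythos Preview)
Your lower-bound half is correct and identical to the paper's argument: $F(\ell)^T\le e^{-T\bar F(\ell)}$, then \eqref{eq:def_pareto} plus the hypothesis $C\ell^{-\alpha}\ge 2C'\ell^{-\alpha(1+\beta)}$ give $\bar F(\ell)\ge\tfrac12 C\ell^{-\alpha}$, and the definition of $\ell$ turns this into $e^{-\log(1/\delta)}=\delta$.

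For the upper-bound half your write-up wanders between two approaches without landing. The union-bound route $1-F(L)^T\le T\bar F(L)\le\tfrac12\log\tfrac1{1-\delta}$ does \emph{not} give $\le\delta$ for all $\delta\in(0,1)$: the inequality $\tfrac12\log\tfrac1{1-\delta}\le\delta$ fails once $\delta\gtrsim 0.797$. The lemma as stated has no such restriction, so this path does not close in general. The inequality you mention in passing, $(1-p)^T\ge e^{-2Tp}$ for $p\le 1/2$, is exactly what the paper uses and is the clean way through. With $p=\bar F(L)$, the hypothesis $CL^{-\alpha}\ge C'L^{-\alpha(1+\beta)}$ gives $\bar F(L)\le 2CL^{-\alpha}$, the hypothesis $L^{-\alpha}\le 1/(4C)$ then gives $p\le 1/2$, and the definition of $L$ gives $4TCL^{-\alpha}=\log\tfrac1{1-\delta}$, so
\[
F(L)^T=(1-\bar F(L))^T\ge e^{-2T\bar F(L)}\ge e^{-4TCL^{-\alpha}}=e^{-\log(1/(1-\delta))}=1-\delta,
\]
hence $\mathbb{P}(\max_i X_i\ge L)\le\delta$. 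Commit to this chain and drop the union-bound detour; then each of the three side-conditions on $T$ is used exactly once and the constants work out without any appeal to $\delta\le 1/2$.
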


	With the notations of \cref{high_prob_bounds}, we respectively denote by $\ell_k$ and $L_k$ the high probability lower and upper bounds for any arm $k$.
	Using \cref{proba_lL} we have by a union bound that
	with probability higher than $1-K\delta_0$
	\begin{equation*}
		\max_{1\le i\le n-(K-1)N}\widetilde{X}_{k^\ast,i}\ge \ell_{k^\ast}(n-(K-1)N,\
		\delta_0)\, ,
	\end{equation*}
	and for any suboptimal arm $k\neq k^\ast$
	\begin{equation*}
		\max_{1\le i\le N}\widetilde{X}_{k,i}\le L_k(N,\ \delta_0)\ .
	\end{equation*}
	Under this event, using the definition of the confidence level $\delta_0$ we observe for $n$ larger than some constant that
	for any suboptimal arm $k\neq k^\ast$, $L_k(N,\delta_0)\le\ell_{k^\ast}(n-(K-1)N,\delta_0)$, which concludes the proof.

\subsection{Proof of \cref{high_prob_bounds}}
\label{proof_high_prob_bounds}

\begin{proof}

For the high probability lower bound we write:
\begin{equation*}
\begin{split}
&\mathbb{P}\left(\max_{1\le i\le T}X_i\le \ell(T,\delta)\right)
=\mathbb{P}(X_1\le \ell(T,\delta))^T\\
&\le \left(1-C\ell(T,\delta)^{-\alpha}+C'\ell(T,\delta)^{-\alpha(1+\beta)}\right)^T\\
&\le \left(1-\frac{1}{2}TC\ell(T,\delta)^{-\alpha}\right)^T
\le e^{-\frac{1}{2}TC\ell(T,\delta)^{-\alpha}}
=\delta.
\end{split}
\end{equation*}
And for the high probability upper bound:
\begin{equation*}
\begin{split}
&\mathbb{P}\left(\max_{1\le i\le T}X_i\le L(T,\delta)\right)
=\mathbb{P}(X_1\le L(T,\delta))^T\\
&\ge \left(1-CL(T,\delta)^{-\alpha}-C'L(T,\delta)^{-\alpha(1+\beta)}\right)^T\\
&\ge (1-2CL(T,\delta)^{-\alpha})^T
\ge e^{-4TCL(T,\delta)^{-\alpha}}
=1-\delta\ .
\end{split}
\end{equation*}

\end{proof}

  \subsection{Proof of \cref{lemma_threshold_u}}

	From \cref{thm1}, we have for any arm $k\in\{1,\; \ldots,\; K  \}$,
	\begin{equation*}
	\begin{split}
	\mathbb{E}\left[X_{k, 1}\mathbbm{1}\{X_{k, 1}>u\}\right]
	& \le \int^\infty_0 \mathbb{P}(X_{k, 1}\mathbbm{1}\{X_{k, 1}>u\}\ge x) \,
	\mathrm{d}x\\
	&= u(1-F_k(u)) + \int^\infty_u (1-F_k(x)) \, \mathrm{d}x
	\le M_k + \Delta_k\, ,
	\end{split}
	\end{equation*}
	where $M_k=(C_k\alpha_k/(\alpha_k-1))u^{-\alpha_k + 1}$ and
	$\Delta_k=(C'\alpha_k(1+\beta_k)/(\alpha_k(1+\beta_k)-1))u^{-\alpha_k(1+\beta_k)+1}$.
	Similarly, we have $\mathbb{E}[X_{k, 1}\mathbbm{1}\{X_{k,
	1}>u\}] \ge~M_k-\Delta_k$.\\
	For $u$ large enough, we want to prove that
	$M_{k^\ast}-\Delta_{k^\ast}>M_k+\Delta_k$ for any arm $k\neq k^\ast$, which would
	prove that $\argmax_{1\le k\le K}\mathbb{E}[Y_{k, 1}]=k^\ast$.
	First, we observe for $u>\max(1, (2C'/\min_{1\le k\le K}C_k)^{1/\min_{1\le k\le K}\beta_k})$
	that $\Delta_k < \frac{1}{2}M_k$.
	Then, for
	$u>(3\max_{1\le k\le K}C_k/\min_{1\le k\le K}C_k)^{1/(\alpha_{(2)}-\alpha_{(1)})}$, we have that
	$\frac{1}{2}M_{k^\ast}>\frac{3}{2}M_k$ for any arm $k\neq k^\ast$, which concludes the proof.

\end{proof}

\end{document}